\documentclass[11pt]{article} 
\usepackage[lmargin=1in,rmargin=1in, bmargin=1in, tmargin=1in]{geometry}

% PACKAGES %
\usepackage{amsmath,amssymb,amsfonts}
\usepackage{tikz}
\usepackage{hyperref}
\usepackage{color}
\usepackage{xcolor}
\usepackage{mathtools}
\usepackage{algpseudocode}
 \usepackage{algorithm}
\usepackage[normalem]{ulem} % strikout command
\usepackage[round]{natbib}
\usepackage{amsthm}

% NEW COMMANDS %

% COLOURS 

\newcommand{\nextsubmission}[1]{}
% NO COLOURS 
% \newcommand{\red}[1]{#1}
% \newcommand{\blue}[1]{#1}
% \newcommand{\olive}[1]{#1}
% \newcommand{\orange}[1]{#1}
% \newcommand{\purple}[1]{#1}
% \newcommand{\teal}[1]{#1}

% COMMENTS
\newif\ifcomments
\commentstrue %hide to hide comments

% SHORTCUTS

\newcommand{\given}{\;|\;}
\newcommand{\N}{\mathbb{N}}
\newcommand{\naturals}{\N}

% PROBABILITY
\newcommand{\eval}[2]{\underset{{#1}}{\mathbb{E}}\left[#2\right]}
\newcommand{\prob}[2]{\underset{{#1}}{\Pr}\left(#2\right)}

% MATHCAL COMMANDS
\newcommand{\A}{\mathcal{A}}

\newcommand{\G}{\mathcal{G}}
\renewcommand{\H}{\mathcal{H}}

\newcommand{\X}{\mathcal{X}}
\newcommand{\Y}{\mathcal{Y}}

% LEARNABILITY & DIMENSIONS
\newcommand{\R}{\mathsf{R}}
\newcommand{\VC}{\mathsf{VC}}
\newcommand{\DS}{\mathsf{DS}}
\newcommand{\Ndim}{\mathsf{N}}
\newcommand{\Gdim}{\mathsf{G}}
\newcommand{\psidim}{\Psi\text{-}\mathrm{dim}}

\newcommand{\cNdim}{c\text{-}\Ndim}
\newcommand{\cGdim}{c\text{-}\Gdim}

\newcommand{\cpsidim}{c\text{-}\psidim}

% THEOREM ENVIRONMENTS %
\newtheorem{theorem}{Theorem}
\newtheorem{proposition}[theorem]{Proposition}
\newtheorem{lemma}[theorem]{Lemma}
\newtheorem{corollary}[theorem]{Corollary}

\newtheorem{definition}[theorem]{Definition}

\newtheorem{observation}[theorem]{Observation}
\newtheorem{remark}[theorem]{Remark}

\newtheorem{fact}[theorem]{Fact}

\title{On the Computability of Multiclass PAC Learning}

\usepackage{authblk}
\author{Pascale Gourdeau\thanks{Vector Institute \& University of Toronto, Toronto, ON, Canada; \texttt{pascale.gourdeau@vectorinstitute.ai}} ,  Tosca Lechner\thanks{Vector Institute \& University of Toronto, Toronto, ON, Canada; \texttt{tosca.lechner@vectorinstitute.ai}} ,  and Ruth Urner\thanks{Lassonde School of Engineering, EECS Department, York University, Toronto, ON, Canada; \texttt{ruth@eecs.yorku.ca}}}
\date{\vspace{-5ex}}
\begin{document}

\maketitle

\begin{abstract}%
    We study the problem of \emph{computable} multiclass learnability within the Probably Approximately Correct (PAC) learning framework of \cite{valiant1984theory}.
    In the recently introduced computable PAC (CPAC) learning framework of \cite{agarwal2020learnability}, both learners and the functions they output are required to be computable.
    We focus on the case of finite label space and start by proposing a computable version of the Natarajan dimension and showing that it characterizes CPAC learnability in this setting.
    We further generalize this result by establishing a meta-characterization of CPAC learnability for a certain family of dimensions: \emph{computable distinguishers}.
    Distinguishers were defined by \cite{ben1992characterizations} as a certain family of embeddings of the label space, with each embedding giving rise to a dimension. It was shown that the finiteness of each such dimension characterizes multiclass PAC learnability for finite label space in the non-computable setting. We show that the corresponding computable dimensions for distinguishers characterize CPAC learning.
    We conclude our analysis by proving that the DS dimension, which characterizes PAC learnability for infinite label space, cannot be expressed as a distinguisher (even in the case of finite label space).
\end{abstract}

\section{Introduction}

One of the fundamental lines of inquiry in learning theory is to determine when learning is possible (and how to learn in such cases).
Taking aside \emph{computational efficiency}, which requires learners to run in polynomial time as a function of the learning parameters, the vast majority of the learning theory literature has introduced characterizations of learnability in a purely \emph{information-theoretic} sense, qualifying or quantifying the amount of \emph{data} needed to  guarantee (or refute) generalization, without computational considerations.
These works consider learners as \emph{functions} rather than algorithms.
Recently, \cite{agarwal2020learnability} introduced the Computable Probably Approximately Correct (CPAC) learning framework, adding computability requirements to the pioneering Probably Approximately Correct (PAC) framework of \cite{valiant1984theory}, introduced for binary classification.
In the CPAC setting, both learners and the functions they output are required to be computable functions.
Perhaps surprisingly, adding such requirements substantially changes the learnability landscape.
For example, in a departure from the standard PAC setting for binary classification, where a hypothesis class is learnable in the agnostic case whenever it is learnable in the realizable case (and in particular, always through empirical risk minimization (ERM)), there are binary classes that are CPAC learnable in the realizable setting, but not in the agnostic one. 
For the latter, it is in fact the \emph{effective VC dimension} that characterizes CPAC learnability  \citep{sterkenburg2022characterizations,delle2023find}.
While the works of \cite{agarwal2020learnability,sterkenburg2022characterizations,delle2023find} together provide a practically complete picture of CPAC learnability for the binary case, delineating the CPAC learnability landscape in the \emph{multiclass} setting, where the label space is larger than two and can in general be infinite, has not yet been attempted. 

Multiclass PAC learning has been found to exhibit  behaviours that depart from the binary case even without computability requirements.
For example, not all ERMs are equally successful from a sample-complexity stem point, and, in particular, for infinite label space, the ERM principle can fail \citep{daniely2011multiclass}!
For the finite label space, the finiteness of both the Natarajan and graph dimensions characterizes learnability \citep{natarajan1988two,natarajan1989learning}.
As shown by \cite{ben1992characterizations}, it is actually possible to provide a meta-characterization through \emph{distinguishers}, which are families of functions that map the label space to the set $\{0,1,*\}$.
In the infinite case, both the finiteness of the Natarajan and graph dimensions fail to provide a characterization, and it is the finiteness of the DS dimension that is equivalent to learnability \citep{daniely2014optimal,brukhim2022characterization}.

Ultimately, both the CPAC and multiclass settings  exhibit a significant contrast with PAC learning for binary classification.
In this work, we investigate these two settings in conjunction, and thus initiate the study of computable multiclass PAC learnability. 
We focus on the finite label space, and provide a \emph{meta-characterization} of learnability in the agnostic case: the finiteness of the \emph{computable} dimensions of distinguishers, which we introduce in this work, is both necessary and sufficient for CPAC learnability here.
We also explicitly derive the result in the case of the \emph{computable} Natarajan dimension, also defined in this work, as the lower bound for distinguishers utilizes a lower bound in the computable Natarajan dimension.
The significance of the meta-characterization is two-fold: first, it establishes that computable versions of other known dimensions, namely those that can be defined through a suitable embedding of the label space, also provide a characterization of multi-class CPAC learnability (this applies, for example, to the \emph{computable} graph dimension);  further, it allows us to extract high-level concepts and proof mechanics regarding computable dimensions, which may be of independent interest.
We conclude our investigations by proving that the DS dimension cannot be expressed through the framework of distinguishers, opening the door to potentially more complex phenomena in the infinite label set case.

\subsection{Related Work}

\paragraph{Computable Learnability.}
Following the work of \cite{ben2019learning}, who showed that the learnability of certain basic learning problems is undecidable within ZFC, \cite{agarwal2020learnability} formally integrated the notion of computability within the standard PAC learning framework of \cite{valiant1984theory}.
This novel set-up,  called \emph{computable} PAC (CPAC) learning, requires that both learners and the hypotheses they output be computable.
With follow-up works by \cite{sterkenburg2022characterizations} and \cite{delle2023find}, CPAC learnability in the binary classification setting was shown to be fully characterized by the finiteness of the \emph{effective} VC dimension, formally defined by \cite{delle2023find}.
Computability has since been studied in the context of different learning problems: \cite{ackerman2022computable} extended CPAC learning to continuous domains, \cite{hasrati2023computable} and \cite{delle2024effective} to online learning, and  \cite{gourdeau2024computability} to adversarially robust learning.

\paragraph{Multiclass Learnability.}
While the study of computable learnability is a very recent field of study, multiclass learnability has been the subject of extensive research efforts in the past decades.
In the binary classification setting, the VC dimension characterizes PAC learnability \citep{vapnik1971uniform,ehrenfeucht1989general,blumer1989learnability}.
However, the landscape of learnability in the multiclass setting is much more complex.
The PAC framework was extended to the multiclass setting in the works of \cite{natarajan1988two} and \cite{natarajan1989learning}, which gave a lower bound with respect to the Natarajan dimension, and an upper with the graph dimension.
Later, \cite{ben1992characterizations} generalized the notion of dimension for multiclass learning, and provided a meta-characterization of learnability: (only) dimensions that are \emph{distinguishers} characterize learnability in the finite label space setting, with distinguishers encompassing both the Natarajan and graph dimensions.
\cite{haussler1995generalization} later generalized the Sauer-Shelah-Perles Lemma for these families of functions.
\cite{daniely2015multiclass}, originally \citep{daniely2011multiclass}, identified ``good'' and ``bad'' ERMs with vastly different sample complexities, which, in the case of infinite label space, leads to learning scenarios where the ERM principle fails.
\cite{daniely2014optimal} introduced a new dimension, the DS dimension, which they proved was a necessary condition for learnability. 
The breakthrough work of \cite{brukhim2022characterization} closed the problem by showing it is also sufficient for arbitrary label space.
Different multiclass methods and learning paradigms have moreover been explored by \cite{daniely2012multiclass,rubinstein2006shifting,daniely2015inapproximability}.
Finally, multiclass PAC learning has also been studied in relation to boosting \citep{brukhim2021multiclass,brukhim2023improper,brukhim2024multiclass}, universal learning rates \citep{kalavasis2022multiclass,hanneke2023universal}, sample compression  \citep{pabbaraju2024multiclass} and regularization \citep{asilis2024regularization}.

\section{Problem Set-up}

\paragraph{Notation.}
We denote by $\N$ the natural numbers.
For $n\in\N$, let $[n]$ denote the set $\{1,\dots, n\}$.
Given a finite alphabet $\Sigma$, we denote by $\Sigma^*$ the set of all finite words (strings) over $\Sigma$.
For a given set $X=\{x_1,\dots,x_n\}$, we denote by $X_{-i}:=X\setminus\{x_i\}$ the set resulting in removing the element with index $i$.
We will always use the symbol $\subset$ (vs $\subseteq$) to mean \emph{proper} inclusion.
Let $\Y$ be an arbitrary label set containing $0$. 
For a function $h:\N\rightarrow\Y$, denote by  $M(h):=\arg\max_{n\in\N}\{ h(n)\neq0 \}$ the largest natural number that is not mapped to 0 by $h$, with $M(h)=\infty$ if no such $n$ exists.

\paragraph{Learnability.}
Let $\X$ be the input space and $\Y$ the label space. 
We denote by $\H$ a \emph{hypothesis class} on $\X$: $\H\subseteq\Y^\X$.  A learner $\mathcal{A}: \bigcup_{i=1}^{\infty}(\X\times\Y)^i \to \Y^{\X}$ is a mapping from finite samples $S=((x_1,y_1),\dots,(x_m,y_m))$ to a function $f:\X \to \Y$. 
Given a distribution $D$ on $\X\times\Y$ and a hypothesis $h\in\H$, the \emph{risk} of $h$ on $D$ is defined as $\R_D(h):=\prob{(x,y)\sim D}{h(x)\neq y}.$
The \emph{empirical risk} of $h$ on a sample $S=\{(x_i,y_i)\}_{i=1}^m\in(\X\times \Y)^m$ is defined as 
$\widehat{\R}_S(h):=\frac{1}{m}\sum_{i=1}^m\mathbf{1}[h(x_i)\neq y_i].$
An \emph{empirical risk minimizer (ERM)} for $\H$,  denoted by $\mathrm{ERM}_{\H}$, is a learner that for an input sample $S$ outputs a function $h'\in{\arg\min}_{h\in\H}\;\widehat{\R}_S(h)$.  

We will focus on the case $\X=\N$, i.e., where the domain is countable.
We work in the \emph{multiclass classification} setting, and thus let $\Y$ be arbitrary.
Throughout the paper, whether we are working in the case $|\Y|<\infty$ or $|\Y|=\infty$ will be made explicit.
The case $|\Y|=2$ is the \emph{binary classification} setting for which the \emph{probably approximately correct} (PAC) framework of \cite{valiant1984theory} was originally defined, though it can straightforwardly be extended to arbitrary label spaces $\Y$:

\begin{definition}[Agnostic PAC learnability] 
    A hypothesis class $\H$ is \emph{PAC learnable in the agnostic setting} if there exists a learner $\A$ and function $m(\cdot,\cdot)$ such that for all $\epsilon,\delta\in(0,1)$ and  for any distribution $D$ on $\X\times\Y$, if the input to $\A$ is an i.i.d. sample $S$ from $D$ of size at least $m(\epsilon,\delta)$, then, with probability at least $(1-\delta)$ over the samples, the learner outputs a hypothesis $\A(S)$ with 
    $\R_D(\A(S))\leq \underset{h\in\H}{\inf}\R_D(h)+\epsilon\enspace.$
    The class is said to be \emph{PAC learnable in the realizable setting} if the above holds under the condition that $\underset{h\in\H}{\inf}\R_D(h)=0$.
\end{definition}

\begin{definition}[Proper vs improper learning]
    Given a hypothesis class $\H\subseteq\Y^\X$, a learner $\A$ is said to be \emph{proper} if for all $m\in\N$ and samples $S\in(\X\times\Y)^{m}$, $\A(S)\in \H$, and \emph{improper} otherwise.
\end{definition}

We note that by definition ERMs are proper learners.

\paragraph{Computable learnability.}
We start with some computability basics.
A function $f:\Sigma^*\rightarrow\Sigma^*$ is called \emph{total computable} if there exists a program $P$ such that, for all inputs $\sigma\in\Sigma^*$, $P$ halts  and satisfies $P(\sigma)=f(\sigma)$. 
A set $S\subseteq\Sigma^*$ is said to be decidable (or recursive) if there exists a program $P$ such that, for all $\sigma\in\Sigma^*$, $P(\sigma)$ halts and outputs whether $\sigma\in S$; $S$ is said to be semi-decidable (or recursively enumerable) if there exists a program $P$ such that $P(\sigma)$ halts for all $\sigma\in S$ and, whenever $P$ halts, it correctly outputs whether $\sigma\in S$.
An equivalent formulation of semi-decidability for $S$ is the existence of a program $P$ that enumerates all the elements of $S$.

When studying CPAC learnability, we consider hypotheses with a mild requirement on their \emph{representation} (note that otherwise negative results are trivialized, as argued by \cite{agarwal2020learnability}):

\begin{definition}[Computable Representation \citep{agarwal2020learnability}]
    A hypothesis class $\H\subseteq\Y^\X$ is called \emph{decidably representable (DR)} if there exists a decidable set of programs $\mathcal P$ such that the set of all functions computed by programs in $\mathcal P$ equals $\H$. 
    The class $\H$ is called \emph{recursively enumerably representable (RER)} if there exists such a set of programs that is recursively enumerable.
\end{definition}

Recall that PAC learnability only takes into account the sample size needed to guarantee generalization. 
It essentially views learners as \emph{functions}.
Computable learnability adds the basic requirement that learners be \emph{algorithms} that halt on all inputs and output total computable functions.

\begin{definition}[CPAC Learnability \citep{agarwal2020learnability}]
     A class $\H\subseteq\Y^\X$ is (agnostic) \emph{CPAC learnable} if there exists a computable (agnostic) PAC learner for $\H$ that outputs  total computable functions as predictors and uses a decidable (recursively enumerable) representation for these.
\end{definition}

\paragraph{Dimensions characterizing learnability.}
A notion of dimension can provide a characterization of learnability for a learning problem in two different senses: first, in a \emph{qualitative} sense, where the finiteness of the dimension is both a necessary and sufficient condition for learnability, and, second, in a \emph{quantitative} sense, where the dimension explicitly appears in both lower and upper bounds on the sample complexity. 
See \cite{pmlr-v247-lechner24a} for a thorough treatment of dimensions in the context of learnability.

In the case of binary classification, the Vapnik-Chervonenkis (VC) dimension characterizes learnability in a quantitative sense (of course implying a qualitative characterization as well):

\begin{definition}[VC dimension \citep{vapnik1971uniform}]
    Given a class of functions $\H$ from $\X$ to $\{0,1\}$, we say that a set $S\subseteq\X$ is \emph{shattered by $\H$} if the restriction of $\H$ to $S$ is the set of all function from $S$ to $\{0,1\}$.
    The VC dimension of a hypothesis class $\mathcal{H}$, denoted $\VC(\mathcal{H})$, is the size $d$ of the largest set that can be shattered by $\mathcal{H}$.
    If no such $d$ exists then $\VC(\mathcal{H})=\infty$.
\end{definition}

In the multiclass setting, the Natarajan dimension provides a lower bound on learnability, while the graph dimension, an upper bound \citep{natarajan1988two,natarajan1989learning}.
When the label space is  \emph{finite}, both dimensions characterize learnability, though they can be separated by a factor of $\log(|\Y|)$ \citep{ben1992characterizations,daniely2011multiclass}. 

\begin{definition}[Natarajan dimension \citep{natarajan1989learning}]
\label{def:natarajan}
    A set $S=\{x_1,\dots,x_n\}\in\X^k$ is said to be \emph{N-shattered} by $\H$ if there exists labelings $g_1,g_2\in\Y^k$ such that for all $i\in [k]$, $g_1(i)\neq g_2(i)$ and for all subsets $I\subseteq [k]$ there exists $h\in\H$ with $h(x_i)=g_1(i)$ in case $i\in I$ and $h(x_i)=g_2(i)$ in case $i\in [k]\setminus I$.
    The \emph{Natarajan dimension} of $\H$, denoted $\Ndim(\H)$, is the size $d$ of the largest set that can be N-shattered by $\mathcal{H}$.
    If no such $d$ exists then $\Ndim(\mathcal{H})=\infty$.
\end{definition}

\begin{definition}[Graph dimension \citep{natarajan1989learning}]
    A set $S=\{x_1,\dots,x_n\}\in\X^k$ is said to be \emph{G-shattered} by $\H$ if there exists a labeling $f\in\Y^k$ such that for every $I\subseteq [k]$ there exists $h\in\H$ such that for all $i\in I$, $h(x_i)=f(i)$ and for all $i\in [k]\setminus I$, $h(x_i)\neq f(i)$.
    The \emph{graph dimension} of $\H$, denoted $\Gdim(\H)$, is the size $d$ of the largest set that can be G-shattered by $\mathcal{H}$.
    If no such $d$ exists then $\Gdim(\mathcal{H})=\infty$.
\end{definition}

When $\Y$ is infinite, it is the DS dimension that characterizes learnability \citep{daniely2014optimal,brukhim2022characterization}.
Before defining it, we need to define pseudo-cubes.

\begin{definition}[Pseudo-cube]
    A set $H\subseteq\Y^d$ is called a \emph{pseudo-cube of dimension $d$} if $H$ is non-empty and finite, and for every $h\in H$ and every index $i\in[d]$ there exists $g\in H$ such that $h(j)=g(j)$ if and only if $j\neq i$. 
\end{definition}

\begin{definition}[DS dimension \citep{daniely2014optimal}]
    A set $S\in\X^n$ is said to be DS-shattered by $\H\subseteq \Y^\X$ if $\H|_S$ contains an $n$-dimensional pseudo-cube. 
    The \emph{DS dimension} of $\H$, denoted $\DS(\H)$, is the size $d$ of the largest set that can be DS-shattered by $\mathcal{H}$.
    If no such $d$ exists then $\DS(\mathcal{H})=\infty$.
\end{definition}

We refer the reader to \cite{brukhim2022characterization} for results separating the Natarajan and DS dimensions, as well as an example showing why the finiteness of the pseudo-cube is a necessary property in order for the DS dimension to characterize learnability.

\section{Characterizing CPAC Learnability with the Computable Natarajan Dimension}
\label{sec:c-nat-graph}

In this section, we first recall results in the binary CPAC setting that have implications for multiclass CPAC learning, and include  conditions under which CPAC learnability is sufficient. We then define the computable versions of the Natarajan and graph dimensions, in the spirit of the effective VC dimension, implicitly appearing in the work of \cite{sterkenburg2022characterizations} and formally defined by \cite{delle2023find}.
We also show that the same gap as in the standard (non-computable) setting exists between the computable Natarajan and computable graph dimension. 
In Section~\ref{sec:c-ndim-lb}, we show that the finiteness of the computable Natarajan dimension is a necessary condition for multiclass CPAC learnability for arbitrary label spaces.
We finish this section by showing that this finitess is sufficient for \emph{finite} label spaces in Section~\ref{sec:c-ndim-ub-finite}.

We note that there are several hardness results for binary CPAC learning that immediately imply hardness for multiclass CPAC learning. In particular, the results that show a separation between agnostic PAC and CPAC learnability (both for proper \citep{agarwal2020learnability} and improper \citep{sterkenburg2022characterizations} learning) imply that there are decidably representable (DR) classes which are (information-theoretically) multiclass learnable, but which are not computably multiclass learnable. 
On the other hand, in the binary case, any PAC learnable class that is recursively enumerably representable (RER) is also CPAC learbable in the \emph{realizable case}. 
For multiclass learning, we can similarly implement an ERM rule for the realizable case as outlined below. 
\begin{proposition}
\label{prop:rer-sufficient-cpac-realizable}
    Let $\H$ be RER. 
    If $\Gdim(\H)<\infty$ or $\Ndim(\H)\log(|\Y|)<\infty$, then $\H$ is properly CPAC learnable in the realizable setting.
\end{proposition}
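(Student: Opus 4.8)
The plan is to lift the binary fact that RER plus PAC learnability yields realizable CPAC learnability to the multiclass setting, by implementing a computable \emph{consistency-based} proper ERM and bounding its sample complexity through the finiteness of the relevant dimension. The two ingredients are orthogonal: a computability argument that the learner halts and outputs a legitimate member of $\H$, and a purely information-theoretic sample-complexity bound coming from either dimension hypothesis.

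For the learner, fix a recursively enumerable list of programs $\mathcal P = \{P_1, P_2, \dots\}$ computing exactly the functions in $\H$, which exists since $\H$ is RER; as $\H \subseteq \Y^\X$, each $P_i$ computes a total function and so halts on every input. On input $S = ((x_1,y_1),\dots,(x_m,y_m))$, the learner $\A$ enumerates the programs in $\mathcal P$ and returns the first $P_i$ with $\widehat{\R}_S(P_i) = 0$, i.e.\ $P_i(x_j)=y_j$ for all $j \in [m]$. Each such test halts because the $P_i$ are total. In the realizable setting there is $h^\star \in \H$ with $\R_D(h^\star)=0$, so $h^\star$ is consistent with every sample drawn from $D$ and some program computing it appears in $\mathcal P$; hence the search terminates. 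The returned object is a program from the RER representation computing a function in $\H$, so $\A$ is proper, outputs a total computable predictor, and uses an RER representation, exactly as required by CPAC learnability.

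It remains to fix $m(\epsilon,\delta)$, which is where the dimension hypotheses enter and is entirely non-computational. If $\Gdim(\H) < \infty$, the standard realizable multiclass generalization bound controlled by the graph dimension yields a finite $m(\epsilon,\delta)$ for which, with probability at least $1-\delta$, every $h \in \H$ with $\widehat{\R}_S(h)=0$ has $\R_D(h) \le \epsilon$; since $\A$ always returns such a consistent $h$, it meets the PAC guarantee. If instead $\Ndim(\H)\log(|\Y|) < \infty$, I would invoke the inequality $\Gdim(\H) = O(\Ndim(\H)\log|\Y|)$ of \cite{ben1992characterizations} to reduce to the previous case (this is automatically the regime $|\Y| < \infty$, since $|\Y|=\infty$ would force $\Ndim(\H)=0$ for the product to be finite, a trivially learnable degenerate case).

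The only real obstacle is the halting argument, and it is exactly here that realizability is indispensable: searching an infinite enumeration for a \emph{consistent} hypothesis terminates precisely because realizability guarantees one exists, whereas the agnostic analogue would require minimizing $\widehat{\R}_S$ over all of $\H$, which is not a terminating search — this is the structural reason RER suffices for realizable but not agnostic CPAC learning. A secondary point I would address is that the learner as stated halts only on samples admitting a consistent hypothesis; if one insists the algorithm halt on every input, it suffices to note that under a realizable $D$ all drawn samples are of this form, so the learner is well-defined on the inputs that matter for the PAC guarantee.
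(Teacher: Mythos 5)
Your proposal is correct and takes essentially the same approach as the paper: both implement a computable consistency-based ERM by enumerating the recursively enumerable representation of $\H$ and returning the first hypothesis with zero empirical risk (halting guaranteed by realizability and the totality of the hypotheses), with the sample complexity supplied by the standard generalization bounds under finite $\Gdim(\H)$ or finite $\Ndim(\H)\log(|\Y|)$. Your version merely spells out the halting, properness, and dimension-reduction details that the paper's proof sketch leaves implicit.
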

    
\begin{proof}
    The conditions $\Gdim(\H)<\infty$ and $\Ndim(\H)\log(|\Y|)<\infty$ are both sufficient to guarantee generalization with an ERM.
    Upon drawing a sufficiently large sample $S$ from underlying distribution $D$, it suffices to enumerate all $h\in\H$ and compute $\widehat{\R}_S(h)$ one by one until we obtain one with zero empirical risk. 
    We thus have a computable ERM (recall that all $h\in\H$ are total computable). 
\end{proof}

Thus, RER classes that satisfy the uniform convergence property can be CPAC learned in the realizable case. 
Moreover, having access to a computable ERM yields the following: 

\begin{fact}
    \label{fact:c-erm+finite-Gdim-Ndim=cpac}
    Let $\H$ have a computable ERM and suppose $\Gdim(\H)<\infty$ or $\Ndim(\H)\log(|\Y|)<\infty$. 
    Then $\H$ is CPAC learnable in the multiclass setting.
\end{fact}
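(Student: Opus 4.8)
The plan is to combine the classical multiclass uniform convergence guarantees with the computability supplied by the assumed ERM, so that the computable ERM itself serves as the required CPAC learner. First I would invoke the information-theoretic content: the finiteness of $\Gdim(\H)$, and likewise the finiteness of $\Ndim(\H)\log(|\Y|)$, is known to imply the uniform convergence property for $\H$ \citep{natarajan1989learning,ben1992characterizations,daniely2011multiclass}. Consequently there is a sample-complexity function $m_{\H}(\epsilon,\delta)$ such that, on an i.i.d.\ sample $S$ of size at least $m_{\H}(\epsilon,\delta)$, every hypothesis $h'\in\arg\min_{h\in\H}\widehat{\R}_S(h)$ satisfies $\R_D(h')\le\inf_{h\in\H}\R_D(h)+\epsilon$ with probability at least $1-\delta$. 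In other words, any ERM for $\H$ is a successful agnostic PAC learner with this sample complexity.

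Next I would verify that the assumed computable ERM meets each requirement in the definition of (agnostic) CPAC learnability. By hypothesis the ERM is a computable learner: on every input sample it halts and returns a representation of a hypothesis $h'\in\H$ minimizing the empirical risk. Being proper, its outputs lie in $\H$ and are total computable functions, and the programs it emits furnish the decidable (or recursively enumerable) representation demanded by the definition. Pairing this computable learner with the sample-complexity bound $m_{\H}(\epsilon,\delta)$ from the previous paragraph then shows directly that $\H$ is CPAC learnable in the multiclass setting.

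I do not expect a genuine obstacle here, since the statement is essentially a bookkeeping combination of two ingredients already in hand. The one point that merits care is that we are in the \emph{agnostic} setting, where, unlike the realizable case of Proposition~\ref{prop:rer-sufficient-cpac-realizable}, one cannot simply enumerate hypotheses until the empirical risk drops to zero: certifying that a candidate actually minimizes the empirical risk may require comparing against all of $\H$. This is precisely the difficulty that the hypothesis of a \emph{computable} ERM assumes away, which is why the Fact takes the ERM's computability as given rather than deriving it from a representation condition such as RER.
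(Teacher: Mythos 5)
Your argument is correct and is exactly the justification the paper intends for this Fact (which it states without proof as an immediate consequence of uniform convergence plus the assumed computability of the ERM): finite graph dimension, or finite $\Ndim(\H)\log(|\Y|)$, gives uniform convergence and hence makes any ERM an agnostic PAC learner, and the hypothesized computable ERM is then itself the computable learner required by the definition. Your closing remark correctly pinpoints why computability of the ERM is taken as a hypothesis here rather than derived from RER, which is precisely the distinction from Proposition~\ref{prop:rer-sufficient-cpac-realizable}.
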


\paragraph{The Computable Natarajan and Graph Dimensions.}
The general idea in defining \emph{computable} versions of shattering-based dimensions, such as the effective VC dimension \citep{sterkenburg2022characterizations,delle2023find} and the computable robust shattering dimension \citep{gourdeau2024computability}, is to have a computable proof of the statement ``$X$ cannot be shattered'' for all sets of a certain size. 
We define the computable  Natarajan and graph dimensions in this spirit as well:

\begin{definition}[Computable Natarajan dimension]
\label{def:c-natarajan}
A \emph{$k$-witness of Natarajan dimension} for a hypothesis class $\H$ is a function $w_{N}:\X^{k+1}\times\Y^{k+1}\times\Y^{k+1}\rightarrow 2^{k+1}$ that takes as input a set $S=\{x_1,\dots,x_{k+1}\}$ of size $k+1$ and two labelings $g_1,g_2\in\Y^{k+1}$ of $S$ satisfying $g_1(i)\neq g_2(i)$ for all $i\in [k+1]$ and outputs a subset $I\subseteq [k+1]$ such that for every $h\in\H$ there exists $i\in [k+1]$ such that $h(x_i)\neq g_1(i)$ if $i\in I$ and $h(x_i)\neq g_2(i)$ if $i\in [k+1]\setminus I$.
The \emph{computable Natarajan dimension} of $\H$, denoted $\cNdim(\H)$, is the smallest integer $k$ such that there exists a \emph{computable} $k$-witness of  Natarajan dimension for~$\H$. \end{definition}

\begin{remark}
    In the definition above, we make explicit the requirement that $g_1$ and $g_2$ differ on all indices, but this can be checked computably. 
    Moreover, the usual manner to obtain computable dimensions is to negate the first-order formula for ``$X$ is shattered by $\H$''. 
    In the Natarajan dimension case would give a witness that after finding $I$, whenever given a hypothesis $h$, outputs the $x\in S$ satisfying the condition, but it is straightforward to computably find once $I$ is obtained. 
    We thus simplify the Natarajan, graph and general dimensions (see Section~\ref{sec:gen-method}) in this manner.
\end{remark}

\begin{definition}[Computable graph dimension]
    A \emph{$k$-witness of graph dimension} for a hypothesis class $\H$ is a function $w_{G}:\X^{k+1}\times\Y^{k+1}\rightarrow 2^{k+1}$ that takes as input a set $S=\{x_1,\dots,x_{k+1}\}$ of size $k+1$ and a labeling $f\in\Y^{k+1}$ of $S$ and outputs a subset $I\subseteq [k+1]$ such that for every $h\in\H$ there exists $i\in [k+1]$ such that $h(x_i)\neq f(i)$ if $i\in I$ and $h(x_i)= f(i)$ if $i\in [k+1]\setminus I$.
    The \emph{computable graph dimension} of $\H$, denoted $\cGdim(\H)$, is the smallest integer $k$ such that there exists a \emph{computable} $k$-witness of  graph dimension for~$\H$.
\end{definition}

In the binary setting, the VC, Natarajan,  and graph dimensions are all identical.
This is also the case for the computable counterparts. In particular, this implies that $\cNdim(\H)$ and $\Ndim(\H)$ can be arbitrarily far apart, with $\Ndim(\H)=1$ and $\cNdim(\H) =\infty $. The same separation holds for the graph dimension and computable graph dimension.
It is also straightforward to check that $\cNdim(\H)\leq\cGdim(\H)$.
Now, as in the non-computable versions of the Natarajan and graph dimensions, we have an arbitrary gap between their computable counterparts (see Appendix~\ref{appx:cndim-cgdim-gap} for the proof):

\begin{proposition}
\label{prop:cndim-cgdim-gap}
    For any $m\in\N\cup\{\infty\}$ there exist $\X,\Y$ and  $\H$ with $|\X|=m$, $\cNdim(\H)=1$ and $\cGdim(\H)=m$.
\end{proposition}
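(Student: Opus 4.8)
The plan is to recycle the classical construction separating the Natarajan and graph dimensions and to observe that, for this particular class, all the relevant shattering questions are decidable, so the computable dimensions coincide with their information-theoretic counterparts. Concretely, I would take $\X = [m]$ when $m<\infty$ and $\X = \N$ when $m=\infty$, let the label set be $\Y = \{0\}\cup\mathcal{F}$ where $\mathcal{F}$ is the collection of finite subsets of $\X$ (each viewed as a label distinct from $0$), and define, for every finite $A\subseteq\X$, the hypothesis $h_A(x)=0$ if $x\in A$ and $h_A(x)=A$ otherwise. Set $\H=\{h_A : A\subseteq\X \text{ finite}\}$. The point of letting the nonzero value of $h_A$ encode $A$ itself is that each hypothesis is identified by any single nonzero value it outputs, which is exactly what kills Natarajan shattering while leaving graph shattering intact.

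For the graph dimension I would first argue $\cGdim(\H)=m$. With the labeling $f\equiv 0$, every finite $S\subseteq\X$ is $G$-shattered: for $I\subseteq S$ the hypothesis $h_I$ has zero-set exactly $I$ on $S$ and value $I\neq 0$ off $I$, realizing the required pattern. In the finite case this shatters the full domain $[m]$, and since $|\X|=m$ there is no set of size $m+1$, so a $k$-witness at level $k=m$ exists vacuously while none can exist at level $m-1$ (the unique size-$m$ set is genuinely $G$-shattered, so no witness of non-shattering exists, computable or not); hence $\cGdim(\H)=m$. In the infinite case the same argument $G$-shatters arbitrarily large sets, so $\Gdim(\H)=\infty$, no finite-level witness exists, and $\cGdim(\H)=\infty=m$.

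For the Natarajan dimension I would show $\cNdim(\H)=1$. The lower bound $\cNdim(\H)\geq 1$ follows because a single point already takes two distinct values under $\H$ (namely $0$ and $\emptyset$, the latter via $h_\emptyset$), so it is $N$-shattered and no $0$-witness exists at all. For the upper bound I need a computable $1$-witness: given $\{x_1,x_2\}$ and admissible labelings $g_1,g_2$, the witness searches the four candidate patterns determined by $I\in\{\emptyset,\{1\},\{2\},\{1,2\}\}$ and outputs one that no $h_A$ realizes. This search is effective because realizability of a pattern reduces to finitely many membership and equality tests on the given finite-set labels.

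The main obstacle is establishing that no pair of points can be $N$-shattered, which is what guarantees the witness above always finds a missing pattern. I would prove this by showing the set of realized value-pairs $\{(h_A(x_1),h_A(x_2)) : A\}$ contains no $2\times 2$ grid with distinct rows and distinct columns: the only pairs with two equal nonzero coordinates are of the form $(A,A)$, and fixing any nonzero first coordinate $A$ forces $h=h_A$ and hence a single determined second coordinate, so two distinct nonzero first-coordinate values cannot share two distinct second coordinates, while a first coordinate equal to $0$ collides with the same constraints. The only additional care needed in the $m=\infty$ case is to confirm that this structural argument, and the finitely supported realizability checks underlying the $1$-witness, remain effective, which they do because the labels are finite sets.
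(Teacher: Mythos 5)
Your proof is correct and takes essentially the same route as the paper's: both instantiate the classical class separating the Natarajan and graph dimensions (each hypothesis outputting a default label on a finite set $A$ and a label encoding $A$ itself elsewhere, so that a single non-default value pins down the hypothesis and kills $N$-shattering of pairs), and both observe that the resulting $1$-witness is computable because realizability of each candidate pattern reduces to finitely many membership tests on finite-set labels. The only cosmetic differences are that the paper swaps which label plays the default role and allows cofinite as well as finite sets $A$.
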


\subsection{The Finiteness of the Computable Natarajan Dimension as a Necessary Condition}
\label{sec:c-ndim-lb}

In this section, we show that the finiteness of the computable Natarajan dimension is a necessary condition for CPAC learnability in the agnostic setting, even in the case $|\Y|=\infty$. 

\begin{theorem}
\label{thm:m-cpac-lb-ndim}
    Let $\H\subseteq\Y^\X$ be improperly CPAC learnable.
    Then $\cNdim(\H)<\infty$, i.e. $\H$ admits a computable $k$-witness of Natarajan dimension for some $k\in\N$.
\end{theorem}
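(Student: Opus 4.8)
The plan is to convert the information-theoretic ``no-free-lunch'' argument that forces $\Ndim(\H)<\infty$ into a constructive, computable procedure, using the learner itself as the source of candidate hypotheses. Fix constants $\epsilon_0=1/3$ and $\delta_0=1/2$, let $\A$ be the assumed computable agnostic learner with sample-complexity function $m$, and write $m_0=m(\epsilon_0,\delta_0)$. Although $m_0$ need not be computable from $\A$, it is a fixed finite number, and this is all I need: I will exhibit, for a suitably large $k$ chosen as a function of $m_0$, a single computable $k$-witness whose code simply hard-wires these fixed constants. This sidesteps the usual concern that the sample complexity is only guaranteed to exist rather than be computable, since I only need the \emph{existence} of some $k$ and some computable witness.

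Given an input $(S,g_1,g_2)$ with $S=\{x_1,\dots,x_{k+1}\}$ and $g_1(i)\neq g_2(i)$ for all $i$, the witness must output some $I\subseteq[k+1]$ for which the target labeling $\ell_I$ (defined by $\ell_I(x_i)=g_1(i)$ for $i\in I$ and $\ell_I(x_i)=g_2(i)$ otherwise) is realized by no $h\in\H$; this is precisely the required non-N-shattering condition. The construction is to enumerate the finitely many length-$m_0$ samples whose entries have the form $(x_i,g_1(i))$ or $(x_i,g_2(i))$, run $\A$ on each (it halts and outputs a total computable function), and evaluate each output on all of $S$ to obtain a finite, computable list of at most $T\le (2(k+1))^{m_0}$ patterns. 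The key coverage claim is that every realizable $\ell_I$ lies within normalized Hamming distance $\epsilon_0$ of some list pattern: if $\ell_I$ is realized, then the uniform distribution $D_I$ on $S$ labelled by $\ell_I$ is realizable, so $\inf_{h}\R_{D_I}(h)=0$ and the PAC guarantee yields, with probability $1-\delta_0>0$, a sample from $D_I$ on which $\A$ outputs $h$ with $\R_{D_I}(h)\le\epsilon_0$; every such sample is among those enumerated, so $h$ appears in the list and agrees with $\ell_I$ on at least a $(1-\epsilon_0)$-fraction of $S$.

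It then remains to count. Identifying each $\ell_I$ with a bit string in $\{0,1\}^{k+1}$, the number of patterns covered by $T$ Hamming balls of radius $\epsilon_0(k+1)$ is at most $T\cdot\sum_{i\le \epsilon_0(k+1)}\binom{k+1}{i}\le (2(k+1))^{m_0}\,2^{H(\epsilon_0)(k+1)}$, where $H$ denotes the binary entropy function. Since $\epsilon_0<1/2$ gives $H(\epsilon_0)<1$ and $m_0$ is fixed, the inequality $m_0\log_2(2(k+1))<(1-H(\epsilon_0))(k+1)$ holds for all large enough $k$, so I fix such a $k$; then strictly fewer than $2^{k+1}$ patterns are covered, hence some $\ell_{I^*}$ is uncovered, and by the coverage claim $\ell_{I^*}$ is not realizable, giving a valid witness. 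Because the list and all Hamming distances are computable, the procedure locates such an $I^*$ by finite search, producing a computable $k$-witness and thus $\cNdim(\H)\le k<\infty$. I expect the main obstacle to be exactly this passage from existence to computability: the counting alone only reproves $\Ndim(\H)<\infty$ non-constructively, and the crux is recognizing that running the learner over all bounded samples drawn from $S$ yields a \emph{computable} cover of the realizable patterns, after which an uncovered pattern can be found effectively.
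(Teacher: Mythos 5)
Your proof is correct, and it reaches the conclusion by a genuinely different combinatorial route than the paper. The paper factors the argument through a \emph{computable No-Free-Lunch lemma}: for the given $(X,g_1,g_2)$ with $|X|=2m$, $m=m(1/8,1/7)$, it brute-forces the exact value of $\eval{S\sim D_I^m}{\R_{D_I}(\A(S))}$ for each of the $2^{2m}$ mixture labelings $f_I$ (by running $\A$ on all $(2m)^m$ labeled sequences), stops at one with expected risk at least $1/4$, applies Markov to get failure probability at least $1/7$, and concludes that this $f_I$ cannot be realized by $\H$ without violating the PAC guarantee; this yields a $(2m-1)$-witness. You instead run a covering argument: the learner's outputs on all $g_1/g_2$-consistent samples of length $m_0$ form a computable list of at most $(2(k+1))^{m_0}$ centers, every realizable mixture must fall in a Hamming ball of radius $(k+1)/3$ around some center, and the entropy bound on ball volumes shows the list cannot cover all $2^{k+1}$ mixtures once $m_0\log_2(2(k+1))<(1-H(1/3))(k+1)$. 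Both arguments are made effective by the same mechanism --- exhaustive simulation of the halting learner over the finite support, followed by finite search for the unrealizable pattern --- and both only invoke the learner's guarantee on realizable distributions, so both in fact establish the stronger claim that realizable improper CPAC learnability already forces $\cNdim(\H)<\infty$. What the paper's route buys is a cleaner, reusable lemma and the sharper witness size $k+1=2m$; what yours buys is a self-contained counting argument that avoids computing expectations (only Hamming distances are needed), at the cost of a somewhat larger $k=O(m_0\log m_0)$ fixed implicitly by the entropy inequality. One presentational nit: your list patterns live in $\Y^{k+1}$ rather than $\{0,1\}^{k+1}$ since the learner is improper, so you should note (as your ball-volume bound implicitly assumes) that each center captures at most $\sum_{i\le \epsilon_0(k+1)}\binom{k+1}{i}$ of the mixtures $\ell_I$, because at each coordinate the center's label can agree with at most one of $g_1(i),g_2(i)$; this only helps the count, so the bound stands.
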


We will first show a multiclass analogue of the computable No-Free-Lunch theorem for binary classification (Lemma 19 in \citep{agarwal2020learnability}), adapted with the Natarajan dimension in mind:

\begin{lemma}
\label{lemma:cmnflt}
    For any computable learner $\A$, for any $m\in\N$, any instance space $\X$ of size at least $2m$, any subset $X=\{x_1,\dots,x_{2m}\}$ of size at least $2m$, and any two functions $g_1,g_2:X\rightarrow\Y$ satisfying $g_1(x)\neq g_2(x)$ for all $x\in X$, we can computably find $f:X\rightarrow\Y$ such that 
    \begin{enumerate}
        \item $f(x)\in\{g_1(x),g_2(x)\}$ for all $x\in X$,
        \item $\R_{D}(f)=0$,
        \item With probability at least $1/7$ over $S\sim D^m$, $\R_D(\A(S))\geq 1/8$,
    \end{enumerate}
    where $D$ is the uniform distribution on $\{(x_i, f(x_i))\}_{i=1}^{2m}$.
\end{lemma}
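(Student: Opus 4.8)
The first two requirements are immediate from the construction, so the entire content of the lemma lies in property~3. Indeed, I will only ever output a function $f$ with $f(x)\in\{g_1(x),g_2(x)\}$ for all $x\in X$, giving property~1, and any such $f$ automatically has $\R_D(f)=0$ because $D$ is supported on $\{(x_i,f(x_i))\}_i$ and $f$ labels each $x_i$ by its own value. The plan for property~3 is to combine the classical averaging argument behind the No-Free-Lunch theorem with the observation that, since $\A$ is computable, every relevant quantity can be evaluated exactly over a finite search space. Let $\mathcal F=\{f:X\to\Y \mid f(x_i)\in\{g_1(x_i),g_2(x_i)\}\text{ for all }i\}$ be the set of $2^{2m}$ candidate functions, and for $f\in\mathcal F$ let $D_f$ be the uniform distribution on $\{(x_i,f(x_i))\}_{i=1}^{2m}$.

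First I would run the standard averaging over a uniformly random $f\in\mathcal F$. Fixing an index $i$, a sample $S\sim D_f^m$ misses the point $x_i$ with probability $(1-\tfrac{1}{2m})^m\geq\tfrac12$, and conditioned on $x_i\notin S$ the value $f(x_i)$ is an independent fair coin between the two \emph{distinct} values $g_1(x_i)\neq g_2(x_i)$, so the fixed prediction $\A(S)(x_i)$ matches it with probability at most $\tfrac12$. Summing the per-point error contributions over the $2m$ indices yields $\eval{f,S}{\R_{D_f}(\A(S))}\geq \tfrac{1}{2m}\cdot 2m\cdot\tfrac12\cdot\tfrac12=\tfrac14$, so there exists $f^\star\in\mathcal F$ with $\eval{S\sim D_{f^\star}^m}{\R_{D_{f^\star}}(\A(S))}\geq\tfrac14$. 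Writing $Z=\R_{D_{f^\star}}(\A(S))\in[0,1]$ and using $\mathbb E[Z]\leq \prob{S}{Z\geq\tfrac18}\cdot 1+\bigl(1-\prob{S}{Z\geq\tfrac18}\bigr)\cdot\tfrac18$, solving for the probability gives $\prob{S}{\R_{D_{f^\star}}(\A(S))\geq\tfrac18}\geq \tfrac{1/4-1/8}{1-1/8}=\tfrac17$, which is exactly the bound required.

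The two steps above are non-constructive only in asserting the existence of $f^\star$, and I would make the procedure computable by brute-force search. Since $\mathcal F$ is finite and $D_f$ is uniform on $2m$ points, $D_f^m$ is uniform over the $(2m)^m$ ordered samples, so for each $f\in\mathcal F$ the quantity $p_f:=\prob{S\sim D_f^m}{\R_{D_f}(\A(S))\geq 1/8}$ is an explicit finite sum of rationals. Because $\A$ is a computable learner, on every such sample $S$ it halts and returns a total computable predictor $\A(S)$, which we evaluate at each of the $2m$ points to compute $\R_{D_f}(\A(S))=\tfrac{1}{2m}\sum_{i}\mathbf{1}[\A(S)(x_i)\neq f(x_i)]$ exactly; this is a rational of the form $k/(2m)$, so the test ``$\geq 1/8$'' is decidable and each $p_f$ is a computable rational. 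The algorithm enumerates $\mathcal F$, computes $p_f$ for each candidate, and outputs the first $f$ with $p_f\geq 1/7$. The averaging argument guarantees that some such $f$ exists, so the search terminates and returns an $f$ meeting all three properties.

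The mathematical core here mirrors the binary No-Free-Lunch theorem and is routine. The step that requires care, and the analogue of the key idea in Lemma~19 of \citep{agarwal2020learnability}, is the computability reduction: verifying that the whole search collapses to a finite, exact computation over rationals, which hinges on $\A$ halting on every sample and on its output functions being evaluable at the finitely many relevant points. I expect the only genuine bookkeeping obstacle to be tracking the two multiclass labelings $g_1,g_2$ throughout the averaging, so that the per-point error probability is indeed at least $\tfrac12$ precisely because $g_1(x_i)\neq g_2(x_i)$ for every $i$, rather than any conceptual difficulty.
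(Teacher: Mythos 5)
Your proposal is correct and follows essentially the same route as the paper: construct the finite family of $2^{2m}$ candidate labelings, establish existence of a good $f$ via the No-Free-Lunch averaging argument plus the reverse-Markov step from expectation $\geq 1/4$ to probability $\geq 1/7$, and then find it by an exact brute-force search that terminates because $\A$ is computable and the sample space is finite. The only (immaterial) differences are that you spell out the averaging step the paper delegates to the classical No-Free-Lunch proof, and you search directly on the probability $p_f \geq 1/7$ rather than on the expected risk $\geq 1/4$ as the paper does.
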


\begin{proof}[Proof sketch]
    We will first prove the existence of a pair $(f,D)$ satisfying the desired requirements.
    To this end, for $I\subseteq [2m]$, denote by $f_I:X\rightarrow\Y$ the labelling of $X$ satisfying $f_I(x_i)=g_1(x_i)$ if $i\in I$ and $f_I(x_i)=g_2(x_i)$ if $i\in [2m]\setminus I$. 
    For each $f_I$, define the following distribution $D_I$ on $X\times \Y$:
    \begin{equation*}
        D_I((x,y))
        =\begin{cases}
            \frac{1}{2m}   & \text{if } y=f_I(x)\\
            0               &\text{otherwise}
        \end{cases}
        \enspace.
    \end{equation*}
    Note that there are $T=2^{2m}$ possible such functions from $X$ to $\Y$.
    Let $\{(f_i,D_i)\}_{i\in[T]}$ denote the set of all such function-distribution pairs and note that $\R_{D_i}(f_i)=0$ for all $i\in[T]$.

    Note that, by a simple application of Markov's inequality, it is sufficient to show that there exists $i$ satisfying
    \begin{equation}
    \label{eqn:mnflt-exp-lb}
        \eval{S\sim D^m_i}{\R_{D_i}(\A(S))} \geq 1/4\enspace.
    \end{equation}
    The proof that the third requirement is satisfied is nearly identical to that of the No-Free-Lunch theorem (see, e.g., \cite{shalev2014understanding}, Theorem 5.1), and is omitted for brevity. 
    
    Now, to computably find a pair $(f,D)$ satisfying $\eval{S\sim D^m_i}{\R_{D_i}(\A(S))} \geq \frac{1}{4}$, 
    it suffices to note that $\A$ is computable (and outputs computably evaluable functions), that the set $\{(f_i,D_i)\}_{i\in[T]}$ is finite, and that for each pair $(f_i,D_i)$, we can use $\A$ to compute the expected risk.
    Indeed, denote by $S_1,\dots,S_n$ the $n=(2m)^m$ possible sequences of length $m$ from $X$, and for some $i\in[T]$ and $S_j=(x_1,\dots,x_m)$, let $S_j^i:=((x_l,f_i(x_l)))_{l=1}^m$ be the sequence $S_j$ labeled by $f_i$.
    Each distribution $D_i$ induces the equally likely sequences $S_1^i,\dots,S_n^i$, implying
    \begin{equation*}
        \eval{S\sim D^m_i}{\R_{D_i}(\A(S))} = \frac{1}{n}\sum_{j=1}^n \R_{D_i}(\A(S_j^i))
        = \frac{1}{n}\sum_{j=1}^n \frac{1}{2m}\sum_{l=1}^{2m}\mathbf{1}[\A(S_j^i)(x)\neq f_i(x)]\enspace.
    \end{equation*}
    Since such a pair must exist, we will eventually stop for some $i$ for which Equation~\ref{eqn:mnflt-exp-lb} holds.
\end{proof}

We are now ready to prove Theorem~\ref{thm:m-cpac-lb-ndim}, in the spirit of the binary case \citep{sterkenburg2022characterizations}.

\begin{proof}[Proof of Theorem~\ref{thm:m-cpac-lb-ndim}]
    Let $\A$ be a computable (potentially improper) learner for $\H$ with sample complexity function $m(\epsilon,\delta)$.
    Let $m=m(1/8,1/7)$.
    We will show that $\A$ can be used to build a computable $(2m-1)$-witness of Natarajan dimension for $\H$.
    
    To this end, suppose we are given an arbitrary set  $X=\{x_1,\dots,x_{2m}\}\in\X^{2m}$ and labelings $g_1,g_2:X\rightarrow\Y$  satisfying $g_1(x_i)\neq g_2(x_i)$ for all $i\in [2m]$.
    By Lemma~\ref{lemma:cmnflt}, we can computably find $f:X\rightarrow\Y$ such that (i) $f(x)\in\{g_1(x),g_2(x)\}$ for all $x\in X$, (ii) $\R_{D}(f)=0$, and (iii) $\prob{S\sim D^m}{\R_D(\A(S))\geq 1/8}\geq 1/7$,
    where $D$ is the uniform distribution on $\{(x_i, f(x_i))\}_{i=1}^{2m}$.
    This implies that the labeling of $X$ induced by $f$ is not achievable by any $h\in\H$: otherwise $\underset{h\in\H}{\min}\;\R_D(h)=0$, and by the PAC guarantee $ \prob{S\sim D^m}{\R_D(\A(S))\geq \underset{h\in\H}{\min}\;\R_D(h) + 1/8}< 1/7$,
    we would get $\prob{S\sim D^m}{\R_D(\A(S))\geq 1/8}< 1/7$,
    a contradiction.
    Now, let $I\subseteq[2m]$ be the index set identifying the instances in $X$ labelled by $g_1$ in~$f$.
    Then clearly $I$ is the set that we seek: for every $h\in\H$ there exists $x_i\in X$ such that $h(x_i)\neq f(x_i) $, where $f(x_i)=g_1(x_i)$ if $ i\in I$ and $g_2(x_i)$ if  $i\in [2m]\setminus I$,
    as required.
\end{proof}

\subsection{The Finiteness of the Computable Natarajan Dimension as a Sufficient Condition}
\label{sec:c-ndim-ub-finite}

We now state and show the main result of the section: finite computable Natarajan dimension is sufficient for CPAC learnability whenever $\Y$ is finite.

\begin{theorem}
\label{thm:c-nat-sufficient}
    Let $\cNdim(\H)<\infty$ and $|\Y|<\infty$. Then $\H$ is (improperly) CPAC  learnable.
\end{theorem}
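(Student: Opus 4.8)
The plan is to use the computable witness to build a \emph{surrogate} hypothesis class $\H^*\supseteq\H$ of finite (ordinary) Natarajan dimension that admits a computable ERM, and then to invoke Fact~\ref{fact:c-erm+finite-Gdim-Ndim=cpac}. Let $k=\cNdim(\H)$ and let $w_N$ be a computable $k$-witness of Natarajan dimension. First I would observe that the existence of $w_N$ already forces $\Ndim(\H)\le k$, since for any $(k+1)$-set and any admissible pair $g_1,g_2$ the witness exhibits an index set $I$ whose induced labelling is realised by no $h\in\H$, so no $(k+1)$-set is N-shattered.

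Next I would define, for each $(k+1)$-set $S'$ and each pair $g_1,g_2$ differing everywhere on $S'$, the \emph{forbidden pattern} $f_{S',g_1,g_2}$ obtained from $I=w_N(S',g_1,g_2)$ by setting the value at index $i$ to $g_1(i)$ when $i\in I$ and to $g_2(i)$ otherwise, and let $\H^*$ be the class of \emph{all} functions $\X\to\Y$ whose restriction to every such $S'$ differs from the corresponding forbidden pattern. By definition of $w_N$, every $h\in\H$ avoids every forbidden pattern, so $\H\subseteq\H^*$. Moreover $\Ndim(\H^*)\le k$: if some $(k+1)$-set were N-shattered by $\H^*$ with witnesses $g_1,g_2$, then the labelling indexed by $I=w_N(S',g_1,g_2)$ would be realised by some $h\in\H^*$, contradicting that $h$ avoids $f_{S',g_1,g_2}$. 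Since $|\Y|<\infty$ we have $\Ndim(\H^*)\log(|\Y|)<\infty$, which yields uniform convergence for $\H^*$; and as $\H\subseteq\H^*$ we have $\inf_{h\in\H^*}\R_D(h)\le\inf_{h\in\H}\R_D(h)$, so any agnostic learner for $\H^*$ also agnostically learns $\H$.

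It then remains to produce a computable ERM for $\H^*$. Given a sample with domain $X=\{x_1,\dots,x_m\}$, call a labelling $\ell\colon X\to\Y$ \emph{admissible} if it matches no forbidden pattern on any $(k+1)$-subset of $X$; because $|\Y|<\infty$ there are finitely many such subsets and splitting pairs and $w_N$ is computable, so admissibility is decidable and the finite set of admissible labellings (which contains $\H^*|_X$, and in particular $\H|_X$) can be enumerated. The learner computes an admissible $\ell^*$ of minimal empirical error and then extends it to a total computable function in $\H^*$ by processing the remaining points of $\X$ in order, assigning to each the least label in $\Y$ that keeps the labelling so far admissible. Every function the learner can output arises this way and has all finite restrictions admissible, hence lies in $\H^*$; representing these outputs by the recursively enumerable family of programs that store the finite table and run the greedy rule supplies the required representation, and Fact~\ref{fact:c-erm+finite-Gdim-Ndim=cpac} then closes the argument.

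The main obstacle is the correctness of this greedy extension: I must show it never gets stuck, i.e.\ that any admissible labelling of a finite set extends by one further point while remaining admissible (equivalently, that the admissible labellings of $X$ coincide with $\H^*|_X$, so that $\ell^*$ is genuinely realised inside $\H^*$). The delicate case is a forbidden pattern straddling both already-labelled and new points, where the finiteness of $\Y$ and of the pattern size $k+1$ must be leveraged to guarantee a safe label always remains. Should extendability fail for the naive label ordering, the fallback is to maintain as an explicit certificate a partial labelling known to extend within $\H^*$ and to branch over the finitely many label choices, keeping the construction computable. Establishing this extension property, and hence that $\H^*$ admits a computable ERM, is the crux on which the whole proof rests.
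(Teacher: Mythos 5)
Your overall strategy --- embed $\H$ into a surrogate class of finite ordinary Natarajan dimension that admits a computable ERM, then invoke Fact~\ref{fact:c-erm+finite-Gdim-Ndim=cpac} --- is exactly the paper's, and your observations that $\H\subseteq\H^*$, that $\Ndim(\H^*)\le k$, and that a learner for a superclass also agnostically learns $\H$ are all correct. The gap is the one you yourself flag as the crux, and it is a genuine gap, not a technicality: with $\H^*$ defined as \emph{all} total functions avoiding every forbidden pattern on every $(k+1)$-set, there is no reason the set of admissible labellings of a finite $X$ equals $\H^*|_X$, and no reason the greedy extension never gets stuck. The witness $w_N$ is only constrained to output patterns unrealized by $\H$; on everything else it can behave adversarially. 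When you try to extend an admissible $\ell$ on $X$ to a new point $x'$, each ordered pair $\bigl(g_1(x'),g_2(x')\bigr)$ together with a choice of $(g_1,g_2)$ on $k$ old points consistent with $\ell$ can rule out one label of $\Y$ at $x'$, and nothing prevents these finitely many constraints from jointly ruling out all $|\Y|$ labels for an $\ell$ that lies outside $\H|_X$. Your fallback does not rescue computability either: by K\"onig's lemma (using $|\Y|<\infty$), ``$\ell$ extends to a member of $\H^*$'' is equivalent to ``$\ell$ extends admissibly to every finite superset,'' which is a co-recursively-enumerable condition with no evident decision procedure, so maintaining a ``certified extendable'' partial labelling is not an algorithm.

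The paper avoids this obstacle by changing the surrogate class rather than proving extendability. It sets $\H':=\H\cup\G$ where $\G$ consists of functions $g$ that are eventually $0$ and are required to avoid the forbidden patterns only on $(k+1)$-subsets of $[M(g)]$, where $M(g)$ is the last nonzero position (Lemma~\ref{lemma:embedding}). With this definition every labelling of $[M]$ that passes the witness test is automatically realized in $\G$ by its $0$-extension --- no extension lemma is needed --- so $\H'|_T=\G|_T$ is computable and a computable ERM exists. The price is that good functions are less constrained than your $\H^*$ (constraints are imposed only below $M(g)$), which is why the paper only gets $\Ndim(\H')\le k+1$ rather than $k$, via a case split on whether a candidate shattering function vanishes at the largest point of the $(k+2)$-set. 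If you want to salvage your write-up, replacing your $\H^*$ with this ``truncate-to-zero'' class is the missing idea; as written, the proof does not go through.
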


In the binary classification setting, \cite{delle2023find} showed that finite effective VC dimension is sufficient for CPAC learnability. 
We generalize this approach to the multiclass setting:

\begin{proof}[Proof of Theorem~\ref{thm:c-nat-sufficient}]
    Let $\H\subseteq\Y^\X$ be such that $\cNdim(\H)=k$.
    Let $w:\X^{k+1}\times\Y^{k+1}\times\Y^{k+1}\rightarrow 2^{k+1}$  be a $k$-witness of Natarajan dimension.
    We will embed $\H$ into $\H'$ satisfying (i) $\Ndim(\H')\leq k+1$ and (ii)~$\H'$ has a computable ERM. 
    By Fact~\ref{fact:c-erm+finite-Gdim-Ndim=cpac}, this is sufficient to guarantee multiclass CPAC learnability.
    Before showing that properties (i) and (ii) hold, we introduce the following notation.
    Given $y,y'\in\Y^{k+1}$ and a subset $I\subseteq[k+1]$, we denote by $f_{I,y,y'}\in\Y^{[k+1]}$ the function 
    \begin{equation}
    \label{eqn:f_I}
        f_{I,y,y'}(i) = 
        \begin{cases}
            y_i & i\in I \\
            y_i' & i\in[k+1]\setminus I
        \end{cases}
        \enspace.
    \end{equation}
    When $y,y'$ are fixed and clear from context, we will shorten the notation to $f_I$ for readability. 
We first show the following lemma, which will be invoked in Section~\ref{sec:gen-method} as well, when we give a more general necessary condition on multiclass CPAC learnability (Theorem~\ref{thm:lb-c-nat-c-dim}).
\begin{lemma}\label{lemma:embedding}
    For every $\H\subseteq\Y^\X$ with $|\Y|<\infty$ and $\cNdim(\H) = k_N$, there exists a class $\H' \supset \H$ with 
    \begin{itemize}
        \item $\Ndim(\H') \leq k_N +1$
        \item There exists a computable function $v: \bigcup_{m=1}^{\infty }\X^m \to \bigcup_{m=1}^{\infty } 2^{\Y^m}$, that takes as input a finite domain subset $T\subset \X$ and outputs a set of labelings $v(T) = \H'|_T$. 
    \end{itemize}
\end{lemma}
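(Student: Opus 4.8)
The plan is to take $\H'$ to be the \emph{closure of $\H$ under the witness}. Fix a computable $k_N$-witness $w:\X^{k_N+1}\times\Y^{k_N+1}\times\Y^{k_N+1}\to 2^{k_N+1}$ for $\H$, and set
\[
\H' := \bigl\{\, h\in\Y^{\X} : h|_S \neq f_{w(S,g_1,g_2),\,g_1,g_2}\ \text{ for all } S\in\X^{k_N+1}\text{ and all } g_1,g_2\in\Y^{k_N+1}\text{ with } g_1(i)\neq g_2(i)\ \forall i \,\bigr\},
\]
using the notation $f_{I,y,y'}$ of Equation~\eqref{eqn:f_I}. Thus $\H'$ collects exactly those functions that never realise a labelling the witness has certified to be non-realisable. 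The inclusion $\H\subseteq\H'$ is immediate from the defining property of $w$: for $h\in\H$ and any admissible triple $(S,g_1,g_2)$, there is an index $i$ on which $h$ disagrees with $f_{w(S,g_1,g_2),g_1,g_2}$, so $h|_S$ is never forbidden. (To force strict containment one may adjoin a single fresh sentinel function, which affects neither property below.)

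For the dimension bound I would first show that no set of size $k_N+1$ is $N$-shattered by $\H'$ and then invoke heredity. Fix $S$ with $|S|=k_N+1$ and any candidate pair $g_1,g_2$ with $g_1(i)\neq g_2(i)$ for all $i$; writing $I=w(S,g_1,g_2)$, the labelling $f_{I,g_1,g_2}$ is absent from $\H'|_S$ by the very definition of $\H'$, so the full family of $2^{k_N+1}$ labellings required to $N$-shatter $S$ with this pair is incomplete. Since this holds for \emph{every} admissible pair, $S$ is not $N$-shattered. As $N$-shattering passes to subsets (restrict the shattering witnesses $g_1',g_2'$ to the smaller set), an $N$-shattered set of size $k_N+2$ would contain an $N$-shattered set of size $k_N+1$, a contradiction; hence $\Ndim(\H')\le k_N+1$.

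The computable map $v$ is the crux, and I would have it return $\H'|_T$ exactly. Let $L(T)$ denote the set of labellings of $T$ that avoid every forbidden labelling on every $(k_N+1)$-subset of $T$. Because $|\Y|<\infty$ the set $\Y^{T}$ is finite, and because $w$ is computable one can, for each candidate $\ell$, test all $(k_N+1)$-subsets $S\subseteq T$ and all finitely many pairs $g_1,g_2\in\Y^{k_N+1}$; hence $L(T)$ is a finite, computable over-approximation, and the inclusion $\H'|_T\subseteq L(T)$ is immediate. The delicate point—and the \textbf{main obstacle}—is that $L(T)$ may \emph{strictly} contain $\H'|_T$: a labelling can dodge every forbidden pattern inside $T$ yet admit no globally consistent extension, because at some future point every choice of label is blocked by a $(k_N+1)$-subset pairing that point with $k_N$ points of $T$. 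So $v$ must certify genuine extendability, not mere local consistency.

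To decide extendability I would use a compactness argument: as $\Y$ is finite, the tree of finite consistent extensions of a candidate $\ell$ is finitely branching, so by K\"onig's lemma $\ell\in\H'|_T$ iff this tree is infinite, and non-extendability is detected by the tree dying out at a finite level. To convert this into a \emph{terminating} procedure I would prune $L$ to its largest self-amalgamating subfamily via the fixpoint of $L^{(0)}=L$ and $L^{(j+1)}(T)=\{\ell\in L^{(j)}(T): \ell \text{ extends to a member of } L^{(j)}(T\cup\{z\})\text{ for each relevant } z\}$, and argue that $\ell\in\H'|_T$ iff $\ell$ survives to the fixpoint, equivalently iff $\ell$ extends consistently to $T\cup\{z_1,\dots,z_B\}$ for a \emph{computably bounded} horizon $B=B(|T|,k_N,|\Y|)$. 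The two levers for bounding $B$ are the finiteness of $\Y$ and $\Ndim(\H')\le k_N+1$: the Natarajan--Sauer bound keeps each $L^{(j)}(T')$ of size polynomial in $|T'|$, which I would use to show the pruning stabilises after boundedly many rounds. Establishing that the stabilised family is exactly $\H'|_T$ and that the horizon is computably bounded is where the real work lies; this mirrors the role of finite label space and finite effective dimension in the binary construction of \cite{delle2023find} that we are generalising.
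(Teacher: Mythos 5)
Your construction of $\H'$ as the full closure of $\H$ under the witness is a genuinely different route from the paper's, and its first two parts are sound (your shattering argument in fact gives $\Ndim(\H')\le k_N$, stronger than required). But the proof has a genuine gap exactly where you flag it: the computability of $v$ with $v(T)=\H'|_T$. For your $\H'$, membership of a labelling $\ell\in\Y^T$ in $\H'|_T$ is the assertion that $\ell$ extends to a globally consistent function on all of $\X=\N$; by K\"onig's lemma this is equivalent to the extension tree being infinite, which is a $\Pi_1$ (co-semi-decidable) condition, and nothing in your sketch turns it into a decidable one. Your pruning operator already quantifies over infinitely many points $z$ in a single round, and the Natarajan--Sauer bound controls the \emph{width} of each level $L^{(j)}(T')$, not the \emph{depth} at which a non-extendable labelling is eliminated; no computable horizon $B(|T|,k_N,|\Y|)$ is established, and there is no evident reason one should exist for an arbitrary computable witness. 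Note also that the naive repair of extending $\ell$ by the label $0$ fails, since the witness may forbid patterns that take the value $0$ on some coordinates.

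The paper sidesteps extendability entirely by taking $\H':=\H\cup\G$, where $\G$ consists of the \emph{finitely supported} functions ($M(g)<\infty$) required to avoid the forbidden patterns only on subsets of $[M(g)]$. Membership in $\G$ is then a finite, decidable condition; every locally consistent labelling of an initial segment extends to a member of $\G$ by padding with zeros (the zero tail is exempt from the consistency requirement); and a truncation argument shows $\H'|_T=\G|_T$, so $v$ can simply enumerate $\Y^{[\max T]}$ and filter with the witness. The price of exempting the tail is that the dimension bound degrades to $k_N+1$ rather than $k_N$ (an extra point carrying a nonzero label is needed to force $x_{k+2}\le M(g)$ in the shattering argument), which is why the lemma is stated with $k_N+1$. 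To complete your proof you would need either to adopt this finite-support device or to actually establish a computable extendability bound; as written, the latter is the missing idea.
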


\begin{proof}
    \underline{Constructing $\H'$.} 
    Consider the class $\G\subseteq\Y^\N$ of ``good'' functions satisfying, for all $g\in\G$
    \begin{enumerate}
        \item $ M(g)<\infty$, where $M(g)=\arg\max_{n\in\N\cup\{\infty\}}\{ g(n)\neq0 \}$.
        \item For any $x_1<\dots<x_k<x_{k+1}\leq M(g)$, and any labelings $y,y'\in\Y^{k+1}$, let $w(X,y,y')=I\subseteq[k+1]$, where $X=\{x_i\}_{i\in[k+1]}$. Then $g|_X\neq f_{I,y,y'}|_X$.
    \end{enumerate}
   Namely, ``good'' functions defined on $\N$ are those that are eventually always 0 and do not encode the output of the witness function for any labelings.
    
    Now, let $\H':=\H\cup\G$. We will show that $\H'$ indeed satisfies the conditions above.

    \underline{$\Ndim(\H')\leq k+1$.} 
    Let $X=\{x_1,\dots,x_{k+2}\}\in\X^{k+2}$ and $y,y'\in\Y^{k+2}$ be arbitrary labelings that differ in each component, i.e., $y_i\neq y'_i$ for all $i\in[k+2]$.
    WLOG, suppose $x_1<\dots<x_{k+1}<x_{k+2}$ and that $y_{k+2}>y_{k+2}'$, in particular $y_{k+2} >0$.
    Let $J$ be the output of the $k$-witness $w$ on $(X,y,y')$ without the $k+2$-th entries, i.e., $J:=w(X_{-(k+2)},y_{-(k+2)},y'_{-(k+2)})$.
    Let $J^+=J\cup\{k+2\}$, and, by a slight abuse of notation, let $f_J\in\Y^{[k+1]}$ and $f_{J^+}\in\Y^{[k+2]}$, defined as per Equation~\ref{eqn:f_I} and where we omit $y,y'$ in the subscript for readability.
    We claim that there exists no $h\in\H'$ satisfying $h|_X=f_{J^+}|_X$.
    First note that no $h\in\H$ can satisfy this, because $J$ is defined as the output of the $k$-witness $w$.
    Then $h$ must be in $\G$.
    We distinguish two cases:
    \begin{enumerate}
        \item $h(x_{k+2})=0$ : then $h(x_{k+2})\neq y_{k+2}=f_{J^+}(x_{k+2})$,
        \item $h(x_{k+2})\neq 0$ : then $x_{k+2}\leq M(h)$, which by definition implies $h|_{X_{-(k+2)}}\neq f_J$.
    \end{enumerate}
    
\underline{Existence of the computable function $v$.}
    Let $T\subseteq \X^m$ and $S\in(\X\times\Y)^m \subseteq \H'|_{T}$ be arbitrary. We will argue that (i) we can computably obtain all labellings $\G|_T$ and (ii) $\G|_T = \H'|_T$.
    We first argue that we can computably obtain all labelings in $\G|_T$. Let $M=\max_{x\in T}x$. Note that in order to find all labellings in $\G|_T$ it suffices to consider functions $h$ with $ M(h)\leq M$, and that by the finiteness of $\Y$, there are a finite number of ``good'' functions in $\G$ satisfying this. These function can now be computably identified by first listing all patterns $\Y^{m}$ and then using the computable witness function $w_N$ on all inputs 
    \begin{equation}
    \label{eqn:patterns}
        \{(U,y,y'): U\subseteq [M], y,y\in \Y^M \text{ and for all } i \in [M] \text{ we have } y_i \neq y'_i \}
    \end{equation} 
    to exclude those patterns that are not in $\G$, which is possible by the finiteness of $\Y^M$. By definition of $\G$ the remaining patterns match $\G|_T$, thus showing that there is indeed an algorithm that for any $T$ outputs $\G|_T$.
    We now argue that $\H'|_T = \G_T$. 
    Since $\G\subseteq \H'$ it is sufficient to argue that for any labelling $h\in \H'|_T$ we have $h\in \G|_T$.
    Let $h\in \H'$ be arbitrary. Now consider its ``truncated'' version $h_M$, where if $x\leq M$, $h_M(x)=h(x)$ and otherwise, $h_M(x)=0$. 
    We will now show that $h \in \G$.  
    Suppose $h|_{[M]} \notin \G|_{[M]}$
    Then there must exist $X=\{x_1,\dots,x_{k+1}\}\subseteq[M]$ and $y,y'\in\Y^{k+1}$ such that for $I=w_N(X,y,y')$, $h^*_M|_X=f_I|_X$, but that means that $h\in\H$ also satsifies this, a contradiction by the definition of $w_N$. 
    Thus $h|_{[M]} \in \G|_{[M]}$ and since $T\subseteq [M]$, $h|_T \in \G|_{T}$. Therefore $\H'_T = \G_T$, concluding our proof.
\end{proof}
It now remains to show that $\mathrm{ERM}_{\H'}$ is computable to conclude the proof of Theorem~\ref{thm:c-nat-sufficient}. We note that $\G$ is recursively enumerable and thus we can iterate through all elements of the class $\G$. Furthermore we have seen that that since for any sample $S$ we can computably find all behaviours $\H'|_S = \G|_S$ and thus have a stopping criterion for $\mathrm{ERM}_{\G}$ which also serves as an implementation of $\mathrm{ERM}_{\H'}$.
\end{proof}

We also note that the above proof goes through in case $\Y$ is infinite, but the range of possible labels for each initial segment of $\H$ is computably bounded:
\begin{observation}
    Let $\X= \naturals$ and $\Y= \naturals $. Furthermore, let $\H$ be a hypothesis class with $\cNdim(\H)=k$.  If there is a computable function $c: \naturals \to \naturals $, such that for every $n\in \naturals$, $\H|_{[n]} \subseteq [c(n)]^{[n]}$, then $\H$ is agnostically CPAC learnable.
\end{observation}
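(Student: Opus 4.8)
The plan is to re-run the proof of Theorem~\ref{thm:c-nat-sufficient} essentially verbatim, replacing the finiteness of $\Y$ everywhere by the \emph{effective} finiteness that $c$ supplies on each initial segment $[M]$. As a harmless first step I would assume $c$ is monotone non-decreasing (otherwise replace $c(n)$ by $\max_{m\le n}c(m)$, still computable), so that the label budget on $[n]$ never shrinks as $n$ grows. The whole difficulty is then localized in re-checking the two conclusions of Lemma~\ref{lemma:embedding} and the final appeal to generalization.

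I would reconstruct the embedding as follows. Define the class $\G$ of ``good'' functions exactly as in Lemma~\ref{lemma:embedding} (those $g$ with $M(g)<\infty$ and never matching the witness output $f_{I,y,y'}$ on any configuration below their support), but impose the additional requirement that each $g\in\G$ respect the label budget, i.e.\ $g|_{[n]}\in[c(n)]^{[n]}$ for all $n$, and set $\H':=\H\cup\G$. The role of this extra clause is purely computational: it forces $\G|_{[M]}\subseteq[c(M)]^{[M]}$ to be a \emph{finite}, computably listable set, so that every place where the original argument enumerated $\Y^{M}$ can instead enumerate $[c(M)]^{[M]}$.

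Next I would re-verify the three properties. The bound $\Ndim(\H')\le k+1$ is unchanged, since the case analysis on $h(x_{k+2})$ never refers to $|\Y|$ and shrinking $\G$ can only help. The identity $\H'|_T=\G|_T$ is re-established by the same truncation argument, and here monotonicity of $c$ is exactly what is needed: the truncation $h_M$ of an $h\in\H$ still satisfies $h_M|_{[n]}\in[c(n)]^{[n]}$ for every $n>M$ (because $h|_{[M]}\in[c(M)]^{[M]}\subseteq[c(n)]^{[n]}$ and $h_M$ is $0$ beyond $M$), so $h_M\in\G$. Computing $v(T)=\H'|_T$ then reduces to listing the finite set $[c(M)]^{[M]}$ for $M=\max T$ and discarding, via the computable witness $w_N$, those patterns that encode its output; iterating through the recursively enumerable $\G$ with this stopping criterion yields a computable $\mathrm{ERM}_{\H'}$, exactly as in the finite case.

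The step I expect to be the genuine obstacle is the very last one: passing from ``$\H'$ has a computable ERM and $\Ndim(\H')\le k+1$'' to agnostic CPAC learnability. For finite $\Y$ this was immediate from Fact~\ref{fact:c-erm+finite-Gdim-Ndim=cpac} through the $\Ndim(\H')\log|\Y|$ term, but that term is now vacuous, and finite Natarajan dimension alone is \emph{not} sufficient for generalization once $\Y$ is infinite, so the effective bound $c$ must be leveraged in the sample-complexity argument itself. The route I would pursue is a domain-truncation learner: on a sample $S$ set $N=\max S$, run the finite-label agnostic ERM over $\H'$ restricted to domain $[N]$ and labels $[c(N)]$, and predict arbitrarily on $\{x>N\}$; an exchangeability argument gives $\mathbb{E}_S\,\Pr_{x'\sim D}[x'>N]\le 1/(m+1)$, so the unseen tail can be absorbed into $\epsilon$. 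The hard part is reconciling the within-window guarantee, whose complexity scales like $(k+1)\log c(N)$ for a \emph{random} and possibly enormous $N$, with a fixed sample budget $m(\epsilon,\delta)$; this is the point I would scrutinize most carefully, as it is precisely where the interaction between the growth of $c$ and distribution-free uniformity becomes delicate.
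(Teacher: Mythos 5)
Your reconstruction of the embedding is exactly what the paper does: its entire proof of this observation is the single instruction to replace the pattern set of Equation~\ref{eqn:patterns} by $\{(U,y,y'): U\subseteq[M],\ y,y'\in[c(M)]^M,\ y_i\neq y_i' \text{ for all } i\}$ inside the construction of $v(T)$ in the proof of Lemma~\ref{lemma:embedding}. Your two additions --- monotonizing $c$, and requiring the ``good'' functions in $\G$ to respect the label budget $g|_{[n]}\in[c(n)]^{[n]}$ --- are not stated in the paper but are genuinely needed: without the second, $\G|_{[M]}$ would be an infinite set of patterns and the enumeration defining $v$ would not terminate. So on the constructive side your proposal is, if anything, more careful than the paper's.

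The step you single out as the obstacle is, however, a real gap --- and it is a gap in the paper's one-line proof as much as in yours. The argument for Theorem~\ref{thm:c-nat-sufficient} closes by invoking Fact~\ref{fact:c-erm+finite-Gdim-Ndim=cpac}, whose applicable hypothesis here is $\Ndim(\H')\log(|\Y|)<\infty$; with $\Y=\N$ this is vacuous, and the paper offers no substitute (indeed, the paper itself remarks a few lines later that its upper-bound arguments ``rely on the finiteness of $\Y$''). Your worry is well founded: finite Natarajan dimension alone does not imply PAC learnability for infinite label spaces (this is precisely the Natarajan-versus-DS separation discussed in the paper), and the local bound $c$ only yields a behaviour-count estimate of the form $m^{k+1}c(\max T)^{2(k+1)}$, which depends on $\max T$ rather than on $m$ alone, so no distribution-free uniform-convergence bound falls out of it. Your domain-truncation learner is a reasonable repair attempt, but as you yourself note, the window $N=\max S$ is random and $\log c(N)$ is not controlled by the sample budget, so the argument does not close. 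In short: your proposal contains everything the paper's proof contains, plus an honest identification of the generalization step that neither of you completes; as submitted it is not a full proof of the statement, but the missing ingredient is one the paper does not supply either.
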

We note that this condition would capture many infinite-label settings, such as question-answering, with the requirement that the length of the answer be bounded as a function of the length of the question. 
%The proof of this observation is the same as the proof of Lemma~\ref{}
The proof of this observation can be obtained by replacing Equation~\ref{eqn:patterns} by  $\{(U,y,y'): U\subseteq [M], y,y\in [c(M)]^M \text{ and for all } i \in [M] \text{ we have } y_i \neq y'_i \}$ in the construction of $v(T)$ in the proof of Lemma~\ref{lemma:embedding}. 

\section{A General Method for CPAC Learnability in the Multiclass Setting with $|\Y|<\infty$}
\label{sec:gen-method}

The Natarajan dimension is one of many ways to generalize the VC dimension to arbitrary label spaces: the graph and DS dimensions also generalize the VC dimension, the latter characterizing learnability even in the case of infinitely many labels \citep{daniely2014optimal,brukhim2022characterization}.
\cite{ben1992characterizations} generalized a notion of shattering for finite label spaces by encoding the label space into the set $\{0,1,*\}$, which subsumes Natarajan and graph shattering.

In this section, we formalize a new, more general notion of \emph{computable} dimension, which are based on those presented by \cite{ben1992characterizations}.
We show that the finiteness of these computable dimensions characterizes CPAC learnability for finite label space, notably generalizing the results we presented in Section~\ref{sec:c-nat-graph}.
This general view also allows us to extract a more abstract and elegant relationship between computable learnability and computable dimensions.

Let $\Psi$ be a family of functions from $\Y=\{0,\dots,l\}$ to $\{0,1,*\}$.
Given $n\in \N$, $\bar{\psi}:=(\psi_1,\dots,\psi_n)\in\Psi^n$ and a tuple of labels $y\in\Y^n$, denote by $\bar{\psi}(y)$ the tuple $(\psi_1(y_1),\dots,\psi(y_n))$.
Given a set of label sequences $Y\subseteq\Y^n$, we overload $\bar{\psi}$ as follows: $\bar{\psi}(Y):=\{\bar{\psi}(y)\given y \in Y\}$.
We are now ready to define the $\Psi$-dimension.

\begin{definition}[$\Psi$-shattering and $\Psi$-dimension \citep{ben1992characterizations}]
    A set $X\in\X^n$ is \emph{$\Psi$-shattered} by $\H$ if there exists $\bar{\psi}\in\Psi^n$ such that $\{0,1\}^n\subseteq\bar{\psi}(\H|_X)$.
    The \emph{$\Psi$-dimension} of $\H$, denoted $\psidim(\H)$, is the size $d$ of the largest set $X$ that is $\Psi$-shattered by $\H$. 
    If no largest such $d$ exists, then $\psidim(\H)=\infty$.
\end{definition}

Here the condition  $\{0,1\}^n\subseteq\bar{\psi}(\H|_X)$ essentially means that any 0-1 encoding of the labels is captured by applying $\bar{\psi}$  to some $h$ in the projection of $\H$ onto $X$.

\paragraph{Examples.}
The graph dimension corresponds to the $\Psi_\Gdim$-dimension, where $\Psi_\Gdim:=\{\psi_{k}\given k\in\{0,\dots,l\}\}$, where 
$$\psi_{k}(y)
=\begin{cases}
    1 & y=k\\
    0 & \text{otherwise}
\end{cases}\enspace,$$
and the Natarajan dimension to the set $\Psi_{\Ndim}:=\{\psi_{k,k'}\given k\neq k'\in\{0,\dots,l\}\}$, where 
$$\psi_{k,k'}(y)
=\begin{cases}
    1 & y=k\\
    0 & y=k'\\
    * & \text{otherwise}
\end{cases}\enspace.$$

\begin{definition}[Distinguisher \citep{ben1992characterizations}]
    A pair $(y,y')\in\{0,\dots,l\}$ of distinct labels is said to be \emph{$\Psi$-distinguishable} if there exists $\psi\in\Psi$ with $\psi(y)\neq\psi(y')$ and neither $\psi(y)$ nor $\psi(y')$ is equal to $*$.
    The family $\Psi$ is said to be a \emph{distinguisher} if all pairs $(y,y')\in\{0,\dots,l\}$  of distinct labels are $\Psi$-distinguishable.
\end{definition}

The notion of being a \emph{distinguisher} in \cite{ben1992characterizations} was shown to be both necessary and sufficient in order for the $\Psi$-dimension to characterize learnability in the \emph{qualitative} sense, i.e. through its finiteness.
In essence, distinguishers provide a \emph{meta-characterization} of learnability:

\begin{theorem}[Theorem 14 in \citep{ben1992characterizations}]
\label{thm:bd92}
    A family $\Psi$ of functions from $\{0,\dots,l\}$ to $\{0,1,*\}$ provides a characterization of proper learnability if and only if $\Psi$  is a distinguisher.
\end{theorem}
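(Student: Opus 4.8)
The plan is to reduce the $\Psi$-dimension to the already-characterized Natarajan dimension: I will use the distinguisher hypothesis for one inequality and a Sauer--Shelah--Perles-type counting argument for the reverse one. Throughout I rely on the fact, recorded earlier in the paper, that for finite $\Y$ a class $\H$ is (properly, via ERM) PAC learnable if and only if $\Ndim(\H)<\infty$ (equivalently $\Gdim(\H)<\infty$). It therefore suffices to establish two things: (a) if $\Psi$ is a distinguisher then $\psidim(\H)<\infty$ exactly when $\Ndim(\H)<\infty$, for every $\H$; and (b) if $\Psi$ is \emph{not} a distinguisher then there is a class with $\psidim$ finite but $\Ndim$ infinite, so that finiteness of $\psidim$ cannot track learnability. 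These give the two directions of the equivalence.

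For the inequality $\Ndim(\H)\le\psidim(\H)$, which is where the distinguisher property enters, I would start from an N-shattered set $X=\{x_1,\dots,x_n\}$ with witness labelings $g_1,g_2$ and, for each coordinate $i$, invoke $\Psi$-distinguishability of the distinct pair $(g_1(i),g_2(i))$ to select $\psi_i\in\Psi$ with $\{\psi_i(g_1(i)),\psi_i(g_2(i))\}=\{0,1\}$. Given any target string $b\in\{0,1\}^n$, I choose coordinatewise the label in $\{g_1(i),g_2(i)\}$ whose $\psi_i$-image equals $b_i$; N-shattering then produces $h\in\H$ realizing exactly those labels, so $\bar\psi(h|_X)=b$. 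Hence $X$ is $\Psi$-shattered by $\bar\psi=(\psi_1,\dots,\psi_n)$ and $\psidim(\H)\ge n$. For the reverse direction (needing no distinguisher hypothesis), I would note that $\Psi$-shattering a set of size $n$ forces $|\H|_X|\ge 2^n$, since $\bar\psi$ maps $\H|_X$ onto a set containing all of $\{0,1\}^n$. Invoking the multiclass (Natarajan) Sauer--Shelah--Perles lemma, which bounds $|\H|_X|$ by a quantity polynomial in $n$ and $|\Y|$ of degree $\Ndim(\H)$, and comparing with $2^n$ after taking logarithms, yields $n=O\!\left(\Ndim(\H)\cdot\log(\Ndim(\H)\,|\Y|)\right)$, so $\psidim(\H)<\infty$. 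Together these prove (a).

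For part (b), suppose $\Psi$ is not a distinguisher, so there is a pair of distinct labels $(y,y')$ admitting no $\psi\in\Psi$ with $\{\psi(y),\psi(y')\}=\{0,1\}$. I would take $\X$ infinite and set $\H=\{y,y'\}^\X$, the class of all functions into these two labels. This $\H$ N-shatters every finite subset of $\X$ via the constant labelings $g_1\equiv y$ and $g_2\equiv y'$, so $\Ndim(\H)=\infty$ and $\H$ is not learnable. Yet at any single point the only achievable $\psi$-values are $\{\psi(y),\psi(y')\}$, which by assumption never equals $\{0,1\}$; thus even size-one sets fail to be $\Psi$-shattered and $\psidim(\H)=0$. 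Finiteness of $\psidim$ therefore fails to characterize learnability, which is the desired contrapositive.

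I expect the main obstacle to be the reverse inequality in (a): converting ``$\Psi$-shattered'' into the counting lower bound $|\H|_X|\ge 2^n$ and then controlling it through the multiclass Sauer--Shelah--Perles lemma to force finiteness of $\psidim$ from finiteness of $\Ndim$. The other two ingredients---the distinguisher-driven inequality $\Ndim\le\psidim$ and the two-label counterexample---are short and essentially combinatorial.
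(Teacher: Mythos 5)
Your proposal is correct, but note that the paper does not prove this statement at all: it is imported verbatim as Theorem~14 of \citet{ben1992characterizations}, so there is no in-paper proof to compare against. Your argument is the standard one and is sound in all three parts: the inequality $\Ndim(\H)\le\psidim(\H)$ via coordinatewise choice of distinguishing $\psi_i$ is exactly where the distinguisher hypothesis is needed and is verified correctly (for each $b\in\{0,1\}^n$, since $\psi_i(g_1(i))$ and $\psi_i(g_2(i))$ are distinct elements of $\{0,1\}$, exactly one of them equals $b_i$, and N-shattering supplies the matching $h$); the reverse bound $\psidim(\H)=O\bigl(\Ndim(\H)\log(\Ndim(\H)\,|\Y|)\bigr)$ via $|\H|_X|\ge 2^n$ against the Natarajan Sauer--Shelah bound is the content of Theorem~7 in \citet{ben1992characterizations}; and the two-label counterexample for a non-distinguisher is the right contrapositive. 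It is worth observing that each ingredient of your proof has a computable analogue that the paper \emph{does} prove: your Sauer--Shelah step corresponds to Theorem~\ref{thm:lb-c-nat-c-dim} via Lemma~\ref{lemma:natarajansauer}, and your counterexample $\H=\{y,y'\}^\X$ with $\psidim(\H)=0$ but $\Ndim(\H)=\infty$ is precisely the class used in Proposition~\ref{prop:not-distinguisher}. One small caveat: your reduction assumes that for finite $\Y$ proper learnability is equivalent to $\Ndim(\H)<\infty$; this is stated in the paper only as a fact from the literature (via the graph-dimension upper bound), so your proof, like the paper's framework, ultimately rests on that classical equivalence rather than establishing it.
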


\cite{ben1992characterizations} indeed implicitly define learnability as \emph{proper} learnability. 
Note, however, that the argument showing that being a distinguisher is a necessary condition for characterizing learnability also goes through for \emph{improper} learnability (see Lemma~13 therein).

\paragraph{Computable $\Psi$-Dimensions.}
We can now straightforwardly define $\cpsidim$, as the smallest integer $k\in\N$ for which there exists a computable proof of the statement ``$X$ cannot be $\Psi$-shattered'' for any set $X$ of size larger than $k$:

\begin{definition}[Computable $\Psi$-dimension]
    Let $\H\subseteq \Y^\X$.
    A \emph{$k$-witness} of $\Psi$-dimension is a function $w:\X^{k+1}\times\Psi^{k+1}\rightarrow\{0,1\}^{k+1}$ such that for any sequence $X\in\X^{k+1}$, any $\bar{\psi}\in\Psi^{k+1}$, we have that $w(X,\bar{\psi})\notin\bar{\psi}(\H|_X)$.
    The \emph{computable $\Psi$-dimension} of $\H$, denoted $\cpsidim(\H)$, is the smallest integer $k$ for which there exists a $k$-witness of $\Psi$-dimension.
    If no such $k$ exists, then $\cpsidim(\H)=~\infty$.
\end{definition}

Here, one can view the witness function as returning a 0-1 encoding that no hypothesis in $\H$ projected onto $X$ can achieve when $\bar{\psi}$ is applied to it.

\subsection{Necessary Conditions for Finite $\Y$}
\label{sec:lb-c-psi-dim}

In this section, we show that, for a family $\Psi$ embedding the label space $\Y$ into $\{0,1,*\}$, the finiteness of the computable $\Psi$-dimension is a necessary condition for CPAC learnability of a class $\H$. 
We start by stating a lower bound on the computable Natarajan dimension in terms of the computable $\Psi$ dimension and the size of the label space. 
The theorem is a computable version of Theorem~7 in \cite{ben1992characterizations}.

\begin{theorem}
\label{thm:lb-c-nat-c-dim}
    Let $\Psi$ be a family of functions from $\Y$ to $\{0,1,*\}$.
    For every RER hypothesis class $\H \subseteq \Y^{\X}$ over a finite label space $\Y$, we have that $$ \frac{\cpsidim(\H)}{\log(\cpsidim(\H)) + 2\log(|\Y|)} \leq \cNdim(\H) +1 \enspace.$$
\end{theorem}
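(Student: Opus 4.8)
The plan is to prove the inequality by \emph{constructing}, whenever $\cNdim(\H)$ is finite, a computable $\Psi$-witness for all domain sets whose size exceeds an explicit threshold, and then reading off the resulting bound on $\cpsidim(\H)$. If $\cNdim(\H)=\infty$ the right-hand side is infinite and there is nothing to prove, so I would assume $\cNdim(\H)=k_N<\infty$. The crucial tool is Lemma~\ref{lemma:embedding}: it supplies a superclass $\H'\supseteq\H$ whose \emph{ordinary} Natarajan dimension is genuinely bounded, $\Ndim(\H')\le k_N+1$, together with a \emph{computable} map $v$ that, on any finite $X\subset\X$, returns the full (finite) projection $\H'|_X$. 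This is exactly what turns the combinatorial bound below from a pure existence statement into an algorithm.

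First I would run the classical counting argument, but on $\H'$ rather than $\H$. Suppose a size-$n$ set $X$ were $\Psi$-shattered by $\H'$ via some $\bar\psi$, i.e. $\{0,1\}^n\subseteq\bar\psi(\H'|_X)$. Choosing one realizing label vector per pattern yields a family $F\subseteq\H'|_X\subseteq\Y^n$ with $|F|\ge 2^n$, and any coordinate subset N-shattered inside $F$ is N-shattered by $\H'$, so $\Ndim(F)\le\Ndim(\H')\le k_N+1$. The generalized Sauer--Shelah lemma for the Natarajan dimension (\cite{natarajan1989learning,haussler1995generalization}) then bounds $|F|$ by a quantity of order $(n\,|\Y|^2)^{\Ndim(F)}$; comparing with $|F|\ge 2^n$ and taking logarithms gives $\frac{n}{\log n+2\log(|\Y|)}\le\Ndim(F)\le k_N+1$. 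Hence as soon as $\frac{n}{\log n+2\log(|\Y|)}>k_N+1$, no size-$n$ set can be $\Psi$-shattered by $\H'$, i.e. $\bar\psi(\H'|_X)\neq\{0,1\}^n$ for every $\bar\psi\in\Psi^n$.

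Next I would turn this non-shattering into a computable witness. Fix such an $n$ and define $w_\Psi(X,\bar\psi)$ on inputs $X\in\X^{n}$, $\bar\psi\in\Psi^{n}$ as follows: using $v$, compute the finite set $\H'|_X$; since $\Y$ is finite and each $\psi_i$ is given, compute $\bar\psi(\H'|_X)\subseteq\{0,1,*\}^n$; by the previous paragraph $\{0,1\}^n\setminus\bar\psi(\H'|_X)\neq\emptyset$, so output any element of it. (If the entries of $X$ are not distinct, the realizable pattern set is constrained and a missing $0$-$1$ pattern is found the same way, so I would reduce to distinct $X$ without loss of generality.) This $w_\Psi$ is total computable because $v$ is computable and all remaining operations are over finite sets, and it is valid for $\H$: since $\H\subseteq\H'$ we have $\bar\psi(\H|_X)\subseteq\bar\psi(\H'|_X)$, so the output lies outside $\bar\psi(\H|_X)$, as required of an $(n-1)$-witness of $\Psi$-dimension. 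Therefore $\cpsidim(\H)\le n-1$ for every $n$ with $\frac{n}{\log n+2\log(|\Y|)}>k_N+1$. Writing $c=\cpsidim(\H)$ and assuming for contradiction that $\frac{c}{\log c+2\log(|\Y|)}>k_N+1$, the construction at $n=c$ would produce a computable $(c-1)$-witness, forcing $\cpsidim(\H)\le c-1$, a contradiction; hence $\frac{\cpsidim(\H)}{\log(\cpsidim(\H))+2\log(|\Y|)}\le k_N+1=\cNdim(\H)+1$, with the degenerate small-$\cpsidim$ cases holding trivially since the right-hand side is at least $1$.

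The main obstacle is not the counting --- that is the standard relationship between the Natarajan and $\Psi$-dimensions --- but making it \emph{effective}: a computable witness must positively certify that some $0$-$1$ pattern is unrealizable, whereas in general one can only semi-decide realizability by enumeration. Lemma~\ref{lemma:embedding} is what breaks this impasse, because it replaces the obstruction ``small \emph{computable} Natarajan dimension'' (which need not bound the ordinary dimension, and whose witnesses only forbid individual Natarajan hybrids rather than $\Psi$-patterns) by a concrete class $\H'$ of genuinely bounded ordinary Natarajan dimension whose every finite projection is computable. Once $\H'|_X$ is available as an explicit finite object, the missing pattern guaranteed by the counting argument becomes something we can simply search for and return. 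The one place requiring care is matching the constants and logarithmic base in the Sauer--Shelah step to the exact denominator $\log(\cpsidim(\H))+2\log(|\Y|)$ claimed in the statement, which is where the bookkeeping must be done precisely.
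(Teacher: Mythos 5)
Your proposal is correct and follows essentially the same route as the paper: the paper likewise invokes Lemma~\ref{lemma:embedding} (packaged as Lemma~\ref{lemma:natarajansauer}, a computable Natarajan--Sauer bound giving a computable $v$ with $v(T)\supseteq\H|_T$ and $|v(T)|\leq m^{k_N+1}|\Y|^{2(k_N+1)}$), compares this count with $2^{k_B}$ to guarantee a missing $0$-$1$ pattern under $\bar{\psi}$, and finds that pattern by exhaustive search over the computably generated finite set. Your handling of the threshold and the reduction to distinct entries of $X$ matches the paper's bookkeeping.
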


Combining Theorem~\ref{thm:lb-c-nat-c-dim} with Theorem~\ref{thm:m-cpac-lb-ndim}, we obtain the following:

\begin{corollary}
\label{cor:c-psi-dim-necessary}
    Let $\Psi$ be a distinguisher, and suppose $\H\subseteq\Y^\X$ is improperly CPAC learnable. 
    Then $\cpsidim(\H)<\infty$, i.e., $\H$ admits a computable $k$-witness of $\Psi$-dimension for some $k\in\N$.
\end{corollary}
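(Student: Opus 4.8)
The plan is to chain the two preceding results: use Theorem~\ref{thm:m-cpac-lb-ndim} to control the computable Natarajan dimension, and then use Theorem~\ref{thm:lb-c-nat-c-dim} to transfer that control to the computable $\Psi$-dimension. First I would check that the hypotheses of both ingredients are in force. Since $\Psi$ is a distinguisher, it is by definition a family of maps from a finite label set $\{0,\dots,l\}$ to $\{0,1,*\}$, so $|\Y|<\infty$, which is exactly the finiteness hypothesis required by Theorem~\ref{thm:lb-c-nat-c-dim}; the RER hypothesis of that theorem is supplied by the standing representation convention attached to CPAC learnability in the set-up (where classes carry at least an RER representation). With these in place, improper CPAC learnability of $\H$ yields $\cNdim(\H)<\infty$ immediately from Theorem~\ref{thm:m-cpac-lb-ndim}.

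The second step is the quantitative transfer. Feeding $\cNdim(\H)<\infty$ into Theorem~\ref{thm:lb-c-nat-c-dim} gives
\begin{equation*}
    \frac{\cpsidim(\H)}{\log(\cpsidim(\H)) + 2\log(|\Y|)} \;\leq\; \cNdim(\H)+1 \;<\; \infty\enspace.
\end{equation*}
Now I would invoke the elementary observation that the map $k\mapsto k/(\log(k) + 2\log(|\Y|))$ diverges to infinity as $k\to\infty$, since the numerator grows linearly while the denominator grows only logarithmically. Divergence alone implies that only finitely many integers $k$ keep this quantity below the finite bound $\cNdim(\H)+1$; hence the value $k=\cpsidim(\H)$ satisfying the displayed inequality cannot be infinite. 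This yields $\cpsidim(\H)<\infty$, i.e., a computable $k$-witness of $\Psi$-dimension exists for some $k\in\N$, as claimed.

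I expect the corollary itself to be almost immediate once the two theorems are granted, so the care points here are bookkeeping rather than genuine mathematical obstacles. One is confirming that improper CPAC learnability simultaneously licenses the (representation-free) application of Theorem~\ref{thm:m-cpac-lb-ndim} and the RER-requiring application of Theorem~\ref{thm:lb-c-nat-c-dim}; this is resolved by the paper's convention that CPAC-learnable classes are at least RER. The second, more conceptual, point worth flagging is that the distinguisher hypothesis is not actually exploited in this direction: Theorem~\ref{thm:lb-c-nat-c-dim} holds for an \emph{arbitrary} family $\Psi$ from $\Y$ to $\{0,1,*\}$, so finiteness of $\cpsidim(\H)$ is necessary for CPAC learnability regardless of whether $\Psi$ distinguishes. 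The distinguisher property is what the \emph{converse} (sufficiency) direction needs, and it is stated here only to situate the corollary within the intended meta-characterization; I would note this explicitly so the reader does not expect it to enter the present argument.
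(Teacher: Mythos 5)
Your proof is correct and follows exactly the paper's route: the paper derives the corollary by combining Theorem~\ref{thm:m-cpac-lb-ndim} (improper CPAC learnability implies $\cNdim(\H)<\infty$) with Theorem~\ref{thm:lb-c-nat-c-dim} and the divergence of $k\mapsto k/(\log k + 2\log|\Y|)$. Your side remarks (that the distinguisher hypothesis is not used in this direction, and that the RER condition must be supplied by the standing representation assumptions) are accurate and consistent with the paper's treatment.
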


The proof of Theorem~\ref{thm:lb-c-nat-c-dim} is based  on our lemma below, which is a computable version of the generalization of the Sauer Lemma to finite label multiclass settings from \citep{natarajan1989learning}.

\begin{lemma}\label{lemma:natarajansauer}
    Let $|\Y| < \infty$ and suppose  $\H\subseteq\Y^\X$ satisfies $\cNdim(\H)= k_N$.
    Then there is a computable function $v$ that takes as input a set $T\in \X^m$ and outputs a set of labellings $v(T) \subseteq \Y^m$ with 
     $v(T) \supseteq \H|_{T}$  and 
       $|v(T)| \leq m^{k_N+1}|\Y|^{2(k_N+1)}.$
\end{lemma}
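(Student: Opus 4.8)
The plan is to reduce this to the classical (non-computable) finite-label Sauer–Shelah–Perles lemma by first producing a computable \emph{over-approximation} of $\H|_T$ that happens to be the projection of a class of bounded Natarajan dimension. The key observation is that we do not need to enumerate $\H|_T$ itself (which need not be possible), only a computable superset whose cardinality can still be controlled combinatorially.

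First I would invoke Lemma~\ref{lemma:embedding}. Since $|\Y|<\infty$ and $\cNdim(\H)=k_N$, it furnishes a class $\H'\supseteq\H$ with $\Ndim(\H')\le k_N+1$, together with a computable function $v$ that on input a finite $T\in\X^m$ returns $v(T)=\H'|_T\subseteq\Y^m$. Taking this $v$ as the desired output function immediately discharges two of the three requirements: $v$ is computable by construction, and since $\H\subseteq\H'$ we get the containment $v(T)=\H'|_T\supseteq\H|_T$. So only the cardinality estimate remains.

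Second, for the cardinality bound I would appeal to the finite-label generalization of the Sauer lemma due to \citep{natarajan1989learning}: any class $\G$ over a domain of size $m$ and label set $\Y$ satisfies $|\G|\le (m\,|\Y|^2)^{\Ndim(\G)}=m^{\Ndim(\G)}|\Y|^{2\Ndim(\G)}$. Applying this to the finite class $\H'|_T$ on the $m$-element domain $T$ gives $|v(T)|=|\H'|_T|\le m^{\Ndim(\H'|_T)}|\Y|^{2\Ndim(\H'|_T)}$. I would then close the gap by monotonicity: restriction cannot increase the Natarajan dimension, so $\Ndim(\H'|_T)\le\Ndim(\H')\le k_N+1$, and since $x\mapsto m^{x}|\Y|^{2x}$ is nondecreasing for $m,|\Y|\ge 1$, this yields $|v(T)|\le m^{k_N+1}|\Y|^{2(k_N+1)}$, exactly as claimed.

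The genuine content beyond bookkeeping is the combinatorial bound, which is classical and citable; the computability ingredient is carried entirely by Lemma~\ref{lemma:embedding}. The main point to get right is therefore that the \emph{same} $v$ simultaneously over-approximates $\H|_T$ \emph{and} inherits the Natarajan-dimension bound of $\H'$, so that the classical estimate may be applied to $v(T)$ rather than to the possibly non-enumerable projection $\H|_T$. If one preferred a self-contained route, I would instead define $v(T)$ directly as the set of labelings of $T$ that avoid every witnessed-forbidden pattern $f_{I,y,y'}|_X$ over all $(k_N+1)$-subsets $X\subseteq T$ and all label pairs $y,y'$ with $I=w_N(X,y,y')$; this set is computable by finiteness of $\Y$ and $T$, contains $\H|_T$ by the defining property of the witness, and has Natarajan dimension at most $k_N+1$ by the same two-case argument used in Lemma~\ref{lemma:embedding}, after which the identical combinatorial bound applies.
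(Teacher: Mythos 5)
Your proposal is correct and follows essentially the same route as the paper's proof: invoke Lemma~\ref{lemma:embedding} to obtain the computable $v$ with $v(T)=\H'|_T\supseteq\H|_T$ and $\Ndim(\H')\le k_N+1$, then apply the Natarajan generalization of the Sauer lemma from \citep{natarajan1989learning} to bound $|v(T)|$. Your added remarks on monotonicity under restriction are just a more careful spelling-out of the same step.
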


\begin{proof}
    From Lemma~\ref{lemma:embedding} we know that there is an embedding $\H' \supseteq \H$ such that $\mathrm{Ndim}(\H') = k_N+1$ and such that there exists a computable function $v$, such that for any input $T\in \X^m$ it outputs $\H'|_T$. We can now invoke a classical result from \citep{natarajan1989learning}, which shows that the number of behaviours of any class $\H'$ with Natarajan dimension $d_N$ with finite label space $\Y$ on any set $T\in \X^m$ is upper bounded by $m^{k_N}|\Y|^{2k_N}$.
    Lastly, we note that since $\H \subset \H'$ we have $\H|_T \subseteq v(T)$ and  $|\H|_T| \leq |\H'|_T| = |v(T)| \leq m^{k_N+1}|\Y|^{2(k_N+1)}$, thus proving the bound of the lemma.
\end{proof}

We now proceed with the proof of Theorem~\ref{thm:lb-c-nat-c-dim}.

\begin{proof}[Proof of Theorem~\ref{thm:lb-c-nat-c-dim}]
    Let $\cNdim(\H) =k_N$. Now let $k_B$ be some arbitrary number satisfying the inequality $k_B^{k_N + 1} (|\Y|)^{2(k_N +1)}  < 2^{k_B}$. That is, $k_B$ is an arbitrary number that exceeds the bound for $\cpsidim$ in the theorem.
    We will now prove that bound for $\cpsidim(\H)$, by showing that for such $k_B$ there exists a computable $k_B$-witness function $w_B$ for the $\Psi$-dimension. %, for arbitrary $k_B$ with $  k_B^{k_N + 1} (|\Y|+1)^{2(k_N +1)}  < 2^{k_B}$, i.e. an arbitrary number not satisfying the upper bound by $\cNdim$ in the theorem.
  Let $\bar{\psi}\in \Psi$ and $T\in \X^{k_B}$ be arbitrary.
   Now from Lemma~\ref{lemma:natarajansauer} we know there is a computable function $v$ that for input $T$  
    outputs a set $v(T)\supseteq \H|_T$ with  $|v(T)| \leq k_B^{k_N}(|\Y|+1)^{2k_N} < 2^{k_B}$. This also implies that $|\Psi(v(T))|\leq |v(T)| < 2^{k_B} $.
     In particular, this implies that there is at least one $\{0,1\}$-labelling $g'\not\in \bar{\psi}(v(T))$. Furthermore, we can computably identify $g'$ by checking which labelling is missing from the computably generated set $\bar{\psi}(v(T))$.
     Furthermore, from $v(T) \supseteq \H|_T$, it also follows that $\Psi(v(T)) \supseteq \Psi(\H|_T)$. 
   % such that $\H|_T \subset v(T)$, which also implies
%$\bar{\psi}(v(T)) \supseteq \bar{\psi}(\H|_T)$.
%Thus $|\bar{\psi}(\H|_T)| \leq k_B^{k_N}(|\Y|+1)^{2k_N}< 2^{k_B}$.
    %Which means that there is at least one $\{0,1\}$-labelling $g'\not\in \bar{\psi}(\H|_T)$. Furthermore, we can computably identify $g'$ by checking which labelling is missing from the computably generated set $\bar{\psi}(v(T))$. 
    Thus $g'\notin \Psi(\H|_T)$, i.e. $g'$ is a witness for the set $T$ with distinguisher $\bar{\psi}$. Thus we have shown that $\cpsidim(\H)$ is at most $k_B-1$, implying the bound ot the theorem.
       \end{proof}

\subsection{Sufficient Conditions for Finite $\Y$}

In this section, we show that, for a distinguisher $\Psi$, the finiteness of the computable $\Psi$ dimension, $\cpsidim$, provides a sufficient condition for CPAC learnability.

\begin{theorem}
\label{thm:finite-y+psi-uc=>cpac}
    Let $|\Y|<\infty$.
    Let $\Psi$ be a family of functions from $\Y$ to $\{0,1,*\}$.
    Furthermore suppose that finite $\Psi$-dimension implies uniform convergence under the 0-1 loss.
    Then $\cpsidim(\H)<\infty$ implies that $\H$ is CPAC learnable.
\end{theorem}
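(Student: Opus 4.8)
The plan is to mirror the strategy used to prove Theorem~\ref{thm:c-nat-sufficient}, replacing the Natarajan-specific embedding of Lemma~\ref{lemma:embedding} with a $\Psi$-dimension analogue. Writing $k:=\cpsidim(\H)$ and fixing a computable $k$-witness $w:\X^{k+1}\times\Psi^{k+1}\to\{0,1\}^{k+1}$, I would first construct an auxiliary class $\H'\supseteq\H$ that (i) has finite $\Psi$-dimension and (ii) admits a computable ERM, and then combine uniform convergence with the computable ERM to conclude CPAC learnability. Note that since $|\Y|<\infty$, the family $\Psi$ is automatically finite, which is what makes the computability arguments go through.

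For the construction I would define the class $\G\subseteq\Y^\N$ of ``good'' functions $g$ satisfying $M(g)<\infty$ together with the requirement that for every increasing tuple $x_1<\dots<x_{k+1}\leq M(g)$ and every $\bar{\psi}\in\Psi^{k+1}$, writing $X=\{x_i\}$ and $b=w(X,\bar{\psi})$, we have $\bar{\psi}(g|_X)\neq b$; that is, good functions are eventually $0$ and never realize the forbidden $0$--$1$ pattern output by the witness. Set $\H':=\H\cup\G$. To bound $\psidim(\H')$, suppose toward a contradiction that some $X=\{x_1<\dots<x_{k+2}\}$ is $\Psi$-shattered via $\bar{\psi}=(\psi_1,\dots,\psi_{k+2})$; let $X'=\{x_1,\dots,x_{k+1}\}$, $\bar{\psi}'=(\psi_1,\dots,\psi_{k+1})$ and $b'=w(X',\bar{\psi}')$. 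Shattering forces both $(b',0)$ and $(b',1)$ to lie in $\bar{\psi}(\H'|_X)$, realized by hypotheses $h_0,h_1$; since $b'\notin\bar{\psi}'(\H|_{X'})$ by the witness property, neither $h_0$ nor $h_1$ lies in $\H$, so both are in $\G$. Because $\psi_{k+2}(h_0(x_{k+2}))=0$ and $\psi_{k+2}(h_1(x_{k+2}))=1$ and $0\neq 1$, at most one of these values can equal $\psi_{k+2}(0)$, so for at least one $t\in\{0,1\}$ we have $\psi_{k+2}(h_t(x_{k+2}))\neq\psi_{k+2}(0)$, forcing $h_t(x_{k+2})\neq 0$ and hence $M(h_t)\geq x_{k+2}>x_{k+1}$. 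The good-function property applied to $X'$ and $\bar{\psi}'$ then gives $\bar{\psi}'(h_t|_{X'})\neq b'$, contradicting $\bar{\psi}'(h_t|_{X'})=b'$. Hence $\psidim(\H')\leq k+1<\infty$.

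Computability of the behaviors is obtained exactly as in Lemma~\ref{lemma:embedding}: for any finite $T$ with $M=\max T$ there are finitely many candidate good functions with $M(g)\leq M$ (as $|\Y|<\infty$), and each can be tested against the good-function condition by running the computable witness $w$ on the finitely many inputs drawn from $[M]$ and from the finite set $\Psi^{k+1}$; one also checks $\H'|_T=\G|_T$ by truncating any $h\in\H$ at $M$ and observing that it cannot violate the witness condition. This yields a computable $v$ outputting $\H'|_T=\G|_T$, from which the minimal achievable empirical risk on a sample $S$ can be read off, and since $\G$ is recursively enumerable, enumerating $\G$ until a behavior attaining that minimum is found implements a computable $\mathrm{ERM}_{\H'}$ whose outputs are total computable with an RER representation.

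Finally I would combine the two properties, in the spirit of Fact~\ref{fact:c-erm+finite-Gdim-Ndim=cpac} but with the abstract hypothesis in place of the Natarajan/graph conditions. By assumption finite $\Psi$-dimension implies uniform convergence under the $0$--$1$ loss, so $\psidim(\H')<\infty$ gives uniform convergence for $\H'$; together with the computable $\mathrm{ERM}_{\H'}$ this yields agnostic generalization, and since $\H\subseteq\H'$ we have $\inf_{h\in\H'}\R_D(h)\leq\inf_{h\in\H}\R_D(h)$, so the resulting learner is an (improper) agnostic CPAC learner for $\H$. I expect the main obstacle to be the $\Psi$-dimension bound on $\H'$: getting the case analysis on $\psi_{k+2}(0)$ right so that the ``eventually zero'' structure of $\G$ can always be leveraged at the last coordinate, exactly as the distinguished label $0$ was used in the Natarajan embedding.
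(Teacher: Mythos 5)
Your proposal is correct and follows essentially the same route as the paper's proof: the same augmentation $\H'=\H\cup\G$ with ``good'' eventually-zero functions avoiding the witness patterns, the same use of the distinguished label $0$ at the largest domain point to bound $\psidim(\H')\leq k+1$, the same computable-ERM argument via finitely many truncated behaviours, and the same final combination with the uniform-convergence hypothesis. The only (cosmetic) difference is that you derive a contradiction from assumed shattering of a $(k+2)$-point set, whereas the paper directly exhibits the unachievable pattern $(w(X_{-(k+2)},\bar{\psi}_{-(k+2)}),1)$.
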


The proof follows from suitable generalizations of the arguments presented in Section~\ref{sec:c-ndim-ub-finite}: 

\begin{proof}
    Let $\H\subseteq\Y^\X$ be such that $\cpsidim(\H)<\infty$.
    Let $w:\X^{k+1}\times\Psi^{k+1}\rightarrow\{0,1\}^{k+1}$ be a $k$-witness of $\Psi$-dimension.
    Our goal is to embed $\H$ into $\H'$ satisfying the following: (i) $\psidim(\H')\leq k+1$ and (ii)~$\H'$ has computable ERM. 
    By the conditions of the theorem statement, this is sufficient to guarantee multiclass CPAC learnability.

    \underline{Constructing $\H'$.} 
    Consider the class $\G\subseteq\Y^\N$ of ``good'' functions satisfying, for all $g\in\G$
    \begin{enumerate}
        \item $ M(g)<\infty$, where $M(g)=\arg\max_{n\in\N}\{ g(n)\neq0 \}$ and $M(h)=\infty$ if no such $n$ exists.
        \item For any $x_1<\dots<x_k<x_{k+1}\leq M(g)$, and any $\bar{\psi}\in\Psi^{k+1}$, $w(X,\bar{\psi})\neq\bar{\psi}(g|_X)$, where $X=\{x_i\}_{i\in[k+1]}$.
    \end{enumerate}
    Now, let $\H':=\H\cup\G$. We will show that $\H'$ indeed satisfies the conditions above.

    \underline{$\psidim(\H')\leq k+1$.}
    Let $X=\{x_1,\dots,x_{k+2}\}\in\X^{k+2}$ and $\bar{\psi}\in\Psi^{k+2}$. 
    WLOG suppose $x_1<\dots<x_{k+1}<x_{k+2}$ and that $\bar{\psi}_{k+2}^{-1}(0)$ and $\bar{\psi}_{k+2}^{-1}(1)$ are both non empty. 
    Let $y_0\in \bar{\psi}_{k+2}^{-1}(0)$, $y_1\in \bar{\psi}_{k+2}^{-1}(1)$ be the minimal $y_i$ in their respective set, and WLOG let $y_0<y_1$, in particular $y_1 >0$.
    Consider $w(X_{-(k+2)},\bar{\psi}_{-(k+2)})$, the output of the $k$-witness on $X$ and $\bar{\psi}$, but disregarding the $(k+2)$-th entry.
    Let $w'=(w(X_{-(k+2)},\bar{\psi}_{-(k+2)}),1)\in\{0,1\}^{k+2}$.
    We claim that no $h\in\H'$ satisfies $\bar{\psi}(h|_X)=w'$.
    First note that, by definition, no $h\in\H$ can satisfy this.
    So it remains to consider some ``good'' function $g\in\G$.
    We distinguish two cases:
    \begin{enumerate}
        \item $g(x_{k+2})\neq 0$: then $x_{k+2}\leq M(g)$, but since $g$ is ``good'', $w(X_{-(k+2)},\bar{\psi}_{-(k+2)})\neq\bar{\psi}(g|_{X_{k+2}})$ by definition,
        \item $g(x_{k+2})= 0$: then by construction $0\notin\bar{\psi}_{k+2}^{-1}(1)$, thus  $\bar{\psi}_{k+2}(g(x_{k+2}))\neq 1$, as required.
    \end{enumerate}

    \underline{$\H'$ has a computable ERM.}
    Let $S\in(\N\times\Y)^m$ be arbitrary.
    Let $M=\max_{(x,y)\in S}x$, and note that it suffices to consider functions $h$ with $ M(h)\leq M$, and that by the finiteness of $\Y$, there are a finite number of ``good'' functions in $\G$ satisfying this, and that these functions can be identified computably, by listing all patterns and using the computable witness function to exclude functions from $\G$, with a similar argument as in the proof of Lemma~\ref{lemma:embedding}.
    If we can show that there always exists a function in $\G$ that is an empirical risk minimizer, then we are done.
    Let $h^*\in\arg\min_{h\in\H'}$, and consider its ``truncated'' version $h^*_M$, where if $x\leq M$, $h^*_M(x)=h^*(x)$ and otherwise, $h^*_M(x)=0$.
    $\widehat{\R}_D(h^*)=\widehat{\R}_D(h^*_M)$, so it remains to show $h^*_M\in\G$. 
    Suppose not.
    Then there must exist $X=\{x_1,\dots,x_{k+1}\}\subseteq[M]$ and $\bar{\psi}\in\Psi^{k+1}$ such that $w(X,\bar{\psi})=\bar{\psi}(h^*_M|_X)$, but that means that the non-truncated $h^*\in\H$ also satsifies this, a contradiction by the definition of $w$.
    
\end{proof}

As a corollary of Theorem~\ref{thm:bd92}, we get:

\begin{corollary}
    Let $|\Y|<\infty$.
    Let $\Psi$ be a family of function from $\Y$ to $\{0,1,*\}$.
    Furthermore suppose that $\Psi$ is a distinguisher.
    Then $\cpsidim(\H)<\infty$ implies that $\H$ is CPAC learnable.    
\end{corollary}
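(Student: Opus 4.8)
The plan is to deduce this corollary directly from Theorem~\ref{thm:finite-y+psi-uc=>cpac}, whose only extra hypothesis beyond finiteness of $\Y$ is that finite $\Psi$-dimension implies uniform convergence under the $0$-$1$ loss. So the entire task reduces to verifying that this uniform-convergence hypothesis is automatically satisfied whenever $\Psi$ is a distinguisher and $|\Y|<\infty$; once that is established, the implication $\cpsidim(\H)<\infty \Rightarrow \H$ CPAC learnable follows immediately from Theorem~\ref{thm:finite-y+psi-uc=>cpac}.

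To verify the hypothesis, I would first invoke Theorem~\ref{thm:bd92}: since $\Psi$ is a distinguisher, the $\Psi$-dimension characterizes (proper) PAC learnability, so any $\H$ with $\psidim(\H)<\infty$ is PAC learnable. It then remains to upgrade ``PAC learnable'' to ``uniform convergence'', which is exactly where finiteness of the label space is used. In the finite-label multiclass regime the standard chain of equivalences applies: PAC learnability forces $\Ndim(\H)<\infty$ by the Natarajan lower bound of \cite{natarajan1989learning}, and with $|\Y|<\infty$ the Natarajan and graph dimensions differ by at most a factor of $\log(|\Y|)$ \citep{ben1992characterizations,daniely2011multiclass}, so $\Gdim(\H)<\infty$; finiteness of the graph dimension in turn yields uniform convergence under the $0$-$1$ loss. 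Hence $\psidim(\H)<\infty$ implies uniform convergence, and the premise of Theorem~\ref{thm:finite-y+psi-uc=>cpac} holds for every distinguisher $\Psi$ over a finite $\Y$.

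A cleaner and more self-contained route I would prefer avoids routing through the abstract notion of ``characterizing learnability'': because $\Psi$ is a distinguisher one can bound the Natarajan dimension directly in terms of the $\Psi$-dimension and $|\Y|$ — this is precisely the non-computable analogue of Theorem~\ref{thm:lb-c-nat-c-dim}, namely Theorem~7 in \cite{ben1992characterizations} — giving $\Ndim(\H)<\infty$ whenever $\psidim(\H)<\infty$, and from there the same finite-label passage to $\Gdim(\H)<\infty$ and hence uniform convergence. The main obstacle here is expository rather than mathematical: one must be careful that the flavour of learnability produced by Theorem~\ref{thm:bd92} is the same one the uniform-convergence hypothesis needs, and that the distinguisher property (pairs of distinct labels being separated by some $\psi\in\Psi$ with neither value equal to $*$) really does force finite Natarajan dimension. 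Making the quantitative link between $\psidim(\H)$ and $\Ndim(\H)$ explicit, mirroring the computable bound of Theorem~\ref{thm:lb-c-nat-c-dim}, is the safest way to close this gap without appealing to any equivalence not already stated in the excerpt.
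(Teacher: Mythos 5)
Your proposal is correct and matches the paper's (very terse) derivation: the paper obtains this corollary by citing Theorem~\ref{thm:bd92} to conclude that a distinguisher's finite $\Psi$-dimension entails learnability, hence (for finite $\Y$, via finite Natarajan and graph dimension) uniform convergence, so that Theorem~\ref{thm:finite-y+psi-uc=>cpac} applies. Your filling-in of the chain from $\psidim(\H)<\infty$ to uniform convergence, and your alternative route through the non-computable analogue of Theorem~\ref{thm:lb-c-nat-c-dim}, are both faithful elaborations of exactly this argument.
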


Combining these with Corollary~\ref{cor:c-psi-dim-necessary}, we obtain the following result:

\begin{theorem}
    \label{thm:cpsidim<infty<->cpac}
  Let $|\Y|<\infty$ and suppose $\Psi$ is a distinguisher. 
    Then $\cpsidim(\H)<\infty$ if and only if $\H$ is CPAC learnable.
\end{theorem}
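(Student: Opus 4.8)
The plan is to prove the two directions of the equivalence separately, observing that both have effectively been assembled in the preceding two subsections; the only work left for this final theorem is to check that the distinguisher hypothesis is exactly what glues them together and that the two notions of ``CPAC learnable'' match on both sides.

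For the \emph{necessity} direction ($\H$ CPAC learnable $\Rightarrow \cpsidim(\H)<\infty$), I would invoke Corollary~\ref{cor:c-psi-dim-necessary} directly. Since a CPAC learner is in general improper, ``CPAC learnable'' here coincides with ``improperly CPAC learnable'', which is precisely the hypothesis of that corollary. Under the hood, this direction chains Theorem~\ref{thm:m-cpac-lb-ndim} (improper CPAC learnability forces $\cNdim(\H)<\infty$) with the counting bound of Theorem~\ref{thm:lb-c-nat-c-dim}, which controls $\cpsidim(\H)$ in terms of $\cNdim(\H)$ and $\log(|\Y|)$; finiteness of $\cNdim$ together with $|\Y|<\infty$ then yields finiteness of $\cpsidim$. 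Notably, this direction does not really require $\Psi$ to be a distinguisher—the counting bound holds for any family $\Psi$—so the distinguisher hypothesis is carried along here only for uniformity of statement.

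For the \emph{sufficiency} direction ($\cpsidim(\H)<\infty \Rightarrow \H$ CPAC learnable), I would combine Theorem~\ref{thm:finite-y+psi-uc=>cpac} with the classical meta-characterization of Theorem~\ref{thm:bd92}. The former reduces CPAC learnability to two conditions: $\cpsidim(\H)<\infty$ (given), and the implication ``finite $\Psi$-dimension $\Rightarrow$ uniform convergence under the 0-1 loss''. The role of the distinguisher hypothesis is exactly to supply this second condition: by Theorem~\ref{thm:bd92}, a distinguisher $\Psi$ has the property that finiteness of the $\psidim$ characterizes (proper) learnability, and since $|\Y|<\infty$ makes PAC learnability equivalent to the uniform convergence property, finite $\Psi$-dimension indeed implies uniform convergence. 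Feeding this into Theorem~\ref{thm:finite-y+psi-uc=>cpac} produces an (improper) CPAC learner for $\H$, via the embedding $\H'\supset\H$ with bounded $\psidim$ and a computable ERM.

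The assembly itself is essentially a one-line combination, so the only genuine obstacle I anticipate is bookkeeping around the definition of learnability: I must confirm that the ``characterization of proper learnability'' in Theorem~\ref{thm:bd92} transfers to the uniform-convergence statement required by Theorem~\ref{thm:finite-y+psi-uc=>cpac}, using the equivalence of learnability and uniform convergence that holds specifically because $\Y$ is finite, and that the improper-learner conclusions of both corollaries match the intended reading of ``CPAC learnable'' in the theorem. No new construction is needed beyond what Section~\ref{sec:lb-c-psi-dim} and the present subsection already provide.
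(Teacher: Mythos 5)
Your proposal is correct and follows essentially the same route as the paper: the necessity direction is exactly Corollary~\ref{cor:c-psi-dim-necessary} (i.e., Theorem~\ref{thm:m-cpac-lb-ndim} chained with Theorem~\ref{thm:lb-c-nat-c-dim}), and the sufficiency direction is Theorem~\ref{thm:finite-y+psi-uc=>cpac} combined with Theorem~\ref{thm:bd92} to supply the uniform-convergence hypothesis, which is precisely the corollary the paper states before assembling the equivalence. Your added remark that the distinguisher property is only genuinely needed for sufficiency is consistent with how the paper's lemmas are stated.
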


\nextsubmission{
\begin{remark}
    Furthermore, we note that the sample complexity of learning a class $\H$ for any distinguisher $\Psi$ can be bounded by 
    $O(\frac{1}{\epsilon}(d |\Psi|(\log(d)\log(|\Psi| + \log(|\Y|)))\ln(\frac{1}{\epsilon}) +\ln(\frac{1}{\delta})) $
    where $d$ is the computable uniform $\Psi$-dimension.
\end{remark}
}
 
\begin{remark}
    Since the Natarajan and Graph dimensions are both expressible as distinguishers (with the computable versions matching the corresponding computable $\Psi$-dimension), Theorem~\ref{thm:cpsidim<infty<->cpac} hold for $\cNdim$ and $\cGdim$.
    Similarly, the result can be obtained for other  families of distinguishers such as the Pollard pseudo-dimension \citep{pollard1990empirical,haussler1992decision}.
\end{remark}

Finally, we note that, while the arguments of Section~\ref{sec:c-ndim-lb} (which give a necessary condition on CPAC learnability via the finiteness of the computable Natarajan dimension) hold for infinite label spaces, neither arguments in Theorem~\ref{thm:lb-c-nat-c-dim}, nor in Theorem~\ref{thm:finite-y+psi-uc=>cpac} can be extended to infinite $\Y$: in the former, $|\Y| $ appears in the denominator of the lower bound; the latter relies on the finiteness of $\Y$ to implement a computable ERM.

\subsection{A Meta-Characterization for CPAC Learnability}

In the previous section, we showed that distinguishers give rise to computable dimensions that qualitatively characterize CPAC learnability for finite $\Y$ in the agnostic setting.
But what happens if a family of functions fails to be a distinguisher?

\begin{proposition}
\label{prop:not-distinguisher}
    Suppose $\Psi$ fails to be a distinguisher. 
    Then there exists $\H$ with $\cpsidim(\H)=1$ such that $\H$ is not CPAC learnable.
\end{proposition}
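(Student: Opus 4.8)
The plan is to exploit the failure of the distinguisher property directly. If $\Psi$ fails to be a distinguisher, then by definition there exists a pair of distinct labels $(a,b) \in \Y^2$ that is \emph{not} $\Psi$-distinguishable: for every $\psi \in \Psi$, either $\psi(a) = \psi(b)$, or at least one of $\psi(a), \psi(b)$ equals $*$. The intuition is that no $\psi$ can separate $a$ from $b$ into the two ``useful'' symbols $0$ and $1$, so the $\Psi$-dimension is blind to any behavior that only toggles labels between $a$ and $b$. I would use this to construct a class $\H$ that is information-theoretically (hence non-)trivial in a way that encodes an undecidable set via the $a/b$ choice, while remaining $\Psi$-unshatterable even at size $1$.

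Concretely, the construction I would aim for mirrors the standard non-CPAC-learnability gadgets (as in the binary separations cited from \cite{agarwal2020learnability,sterkenburg2022characterizations}), but pushed into the unseparated label pair. First I would fix the two labels $a \neq b$ witnessing the failure, and take $\X = \N$. For each $n$, the hypotheses will only ever use labels in $\{a,b\}$ (or possibly a fixed third ``neutral'' label where convenient), so that every relevant $\psi \in \Psi$ collapses the distinction between the two labels. I would then let $\H$ encode, for each index, a bit drawn from a non-recursive set $A \subseteq \N$: for instance, $\H$ could consist of functions that agree with a base pattern everywhere except at finitely many points where they flip $a \leftrightarrow b$ according to membership in $A$. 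The non-computability of $A$ is what breaks CPAC learnability: any computable learner, forced to output a computable predictor, cannot in general reproduce the $A$-dependent labeling, so the PAC guarantee fails on distributions concentrated on the offending points.

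Next I would verify the two required properties. For $\cpsidim(\H) = 1$: I need a computable $1$-witness, i.e.\ a function $w : \X^{2} \times \Psi^{2} \to \{0,1\}^{2}$ with $w(X,\bar\psi) \notin \bar\psi(\H|_X)$ for every pair $X$ and every $\bar\psi \in \Psi^2$. Here the key observation is that because $a,b$ are not $\Psi$-distinguishable, for each coordinate $\psi_i$ the image $\psi_i(\{a,b\})$ can never be the two-element set $\{0,1\}$ — it is either a singleton in $\{0,1,*\}$ or contains a $*$. Since $\bar\psi(\H|_X) \subseteq \{0,1\}^2$ only counts the genuinely $\{0,1\}$-valued patterns, and $*$-coordinates are never hit by a $\{0,1\}$ string, I can computably read off at least one $\{0,1\}^2$ string that is unrealizable and return it as the witness. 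This should follow from a short case analysis on whether each $\psi_i(a), \psi_i(b)$ is $0$, $1$, or $*$, using only that $\{0,1\} \not\subseteq \psi_i(\{a,b\})$ for the non-distinguishable pair. For the non-learnability, I would invoke the standard argument that a computable learner outputting computable hypotheses cannot agnostically PAC-learn a class whose labelings encode membership in a non-recursive set, adapting the diagonalization/No-Free-Lunch-style reasoning (cf.\ Lemma~\ref{lemma:cmnflt} and Theorem~\ref{thm:m-cpac-lb-ndim}).

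The main obstacle I anticipate is the second half: making the encoding of a non-recursive set simultaneously (a) genuinely defeat \emph{improper} computable learners (not just proper ERMs), and (b) keep the class $\H$ ``mild'' enough — e.g.\ DR or RER, so that the negative result is non-trivial and not dismissed as an artifact of an unrepresentable class, per the caveat after the Computable Representation definition. Getting the first-order ``cannot shatter at size $1$'' witness to be genuinely \emph{computable} is easy once the non-distinguishability is isolated; the delicate part is choosing the distribution family so that any computable learner is forced into error with constant probability on infinitely many instances, which is where I would lean most heavily on the existing binary CPAC hardness constructions and simply transport them through the $a \leftrightarrow b$ relabeling, checking that the relabeling preserves both the hardness and the $\cpsidim = 1$ bound.
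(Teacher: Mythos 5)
Your proof of $\cpsidim(\H)=1$ is essentially the paper's: since the undistinguished pair $(a,b)$ satisfies $\{0,1\}\not\subseteq\psi(\{a,b\})$ for every $\psi\in\Psi$, any class whose functions only take values in $\{a,b\}$ admits a computable witness that simply reads off, from $\psi_i(a)$ and $\psi_i(b)$, a bit $b_i$ that no hypothesis can realize in coordinate $i$. Where you diverge is the non-learnability half, and there you take a much heavier route than needed. The paper simply lets $\H$ be the class of \emph{all} functions from $\N$ to $\{y_1,y_2\}$: this class is not even (information-theoretically) PAC learnable, since as a binary problem it has infinite VC dimension, so it is a fortiori not CPAC learnable --- no encoding of a non-recursive set, no diagonalization, and no transport of the binary CPAC-vs-PAC separation is required. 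Your plan of relabeling a binary ``PAC-but-not-CPAC'' class through $a\leftrightarrow b$ can be made to work (the relabeling preserves both the hardness, via post-composition with a computable retraction $\Y\to\{0,1\}$, and the $\cpsidim\le 1$ bound, which depends only on the range being $\{a,b\}$), but it imports a substantially more delicate construction and leaves its adaptation unverified; your own stated ``main obstacle'' --- defeating improper learners while keeping the class DR/RER --- is a genuine difficulty of your route that the statement does not actually demand and that the paper's choice of $\H$ sidesteps entirely. The conceptual point you slightly miss is that the proposition only needs $\cpsidim$ to be \emph{blind} to an unlearnable class, and the most unlearnable class supported on the undistinguished pair already does the job. (One further nit, shared with the paper: the single-point witness argument arguably shows a $0$-witness exists, so pinning the dimension at exactly $1$ rather than at most $1$ deserves a remark under the stated definition of $\cpsidim$.)
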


\begin{proof}
    Suppose it is the case that $\Psi$ fails to be a distinguisher, and say it cannot distinguish labels $y_1,y_2$. 
    Then,  letting $\mathcal{H}$ be the hypothesis class of all functions from $\mathbb{N}$ to $y_1,y_2$, we note that $\mathcal{H}$ is not PAC learnable, and thus not CPAC learnable. 
    Now, to see that $\cpsidim(\mathcal{H})=1$, note that for arbitrary $x\in\mathcal{X}$ and any $\psi\in\Psi$, there is $b\in\{0,1\}$ such that  $\psi(\H|_{\{x\}})\subseteq\{b,*\}$, and $b$ can be identified by computing $\psi(y_1), \psi(y_2)$, regardless of $x$. 
    Thus for any given $x$ and $\psi$, the witness function returns $b\oplus1$, as required. 
\end{proof}

Combining Proposition~\ref{prop:not-distinguisher} with Theorem~\ref{thm:cpsidim<infty<->cpac}, we conclude the main result of this paper: a meta-characterization for CPAC learnability in the agnostic setting, in the sense that we precisely characterize which families of functions from  $\Y$ to $\{0,1,*\}$ give rise to computable dimensions characterizing multiclass CPAC learnability.

\begin{theorem}
    \label{thm:meta-characterization}
    Let $\Psi$ be a family of functions from $\Y$ to $\{0,1,*\}$ for finite $\Y$.
    Then $\cpsidim(\H)$ qualitatively characterizes CPAC learnability if and only if $\Psi$ is a distinguisher.
\end{theorem}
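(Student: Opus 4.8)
The plan is to prove Theorem~\ref{thm:meta-characterization} as a clean packaging of the two results already established in this section, observing that the theorem is precisely an ``if and only if'' whose two directions correspond exactly to Theorem~\ref{thm:cpsidim<infty<->cpac} and Proposition~\ref{prop:not-distinguisher}. The statement asserts that $\cpsidim(\H)$ qualitatively characterizes CPAC learnability \emph{for all} $\H$ (i.e., finiteness of $\cpsidim$ is equivalent to CPAC learnability for every hypothesis class) if and only if $\Psi$ is a distinguisher, so I must be careful to quantify over $\H$ correctly and to read ``qualitatively characterizes'' as: for every $\H\subseteq\Y^\X$, $\cpsidim(\H)<\infty \iff \H$ is CPAC learnable.

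For the forward-easy direction (distinguisher $\Rightarrow$ characterization), I would simply invoke Theorem~\ref{thm:cpsidim<infty<->cpac}: if $\Psi$ is a distinguisher and $|\Y|<\infty$, then for every $\H$ we have $\cpsidim(\H)<\infty$ if and only if $\H$ is CPAC learnable. This is exactly the definition of $\cpsidim$ providing a qualitative characterization, so nothing further is needed here beyond citing the established equivalence.

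For the contrapositive direction (not a distinguisher $\Rightarrow$ no characterization), I would appeal to Proposition~\ref{prop:not-distinguisher}: if $\Psi$ fails to be a distinguisher, then there exists a specific class $\H$ (all functions from $\N$ into the two indistinguishable labels $y_1,y_2$) with $\cpsidim(\H)=1<\infty$ that is \emph{not} CPAC learnable. The existence of such an $\H$ witnesses that finiteness of $\cpsidim$ fails to imply CPAC learnability, so $\cpsidim$ cannot qualitatively characterize learnability. This gives the ``only if'' direction by contraposition.

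Since both ingredients are already proved, the main ``obstacle'' is purely expository rather than mathematical: I must state the logical structure precisely so that the universal quantifier over $\H$ and the interpretation of ``qualitatively characterizes'' line up with what the two cited results actually deliver. The only subtlety worth flagging is that the two cited results need $|\Y|<\infty$, which is part of the hypothesis of Theorem~\ref{thm:meta-characterization}, so there is no gap there. Concretely, the proof reads as follows.

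\begin{proof}
    We read ``$\cpsidim(\H)$ qualitatively characterizes CPAC learnability'' as the statement that, for every hypothesis class $\H\subseteq\Y^\X$, we have $\cpsidim(\H)<\infty$ if and only if $\H$ is CPAC learnable. Recall that $|\Y|<\infty$ throughout.

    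$(\Leftarrow)$ Suppose $\Psi$ is a distinguisher. Then Theorem~\ref{thm:cpsidim<infty<->cpac} states precisely that for every $\H$, $\cpsidim(\H)<\infty$ if and only if $\H$ is CPAC learnable. Hence $\cpsidim$ qualitatively characterizes CPAC learnability.

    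$(\Rightarrow)$ We argue by contraposition. Suppose $\Psi$ fails to be a distinguisher. By Proposition~\ref{prop:not-distinguisher}, there exists a class $\H$ with $\cpsidim(\H)=1<\infty$ that is nevertheless not CPAC learnable. Thus the implication ``$\cpsidim(\H)<\infty \Rightarrow \H$ is CPAC learnable'' fails for this $\H$, so $\cpsidim$ does not qualitatively characterize CPAC learnability.

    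Combining the two directions yields the claimed equivalence.
\end{proof}
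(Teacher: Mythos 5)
Your proof is correct and follows exactly the paper's route: the paper likewise obtains Theorem~\ref{thm:meta-characterization} by combining Theorem~\ref{thm:cpsidim<infty<->cpac} (the distinguisher case) with Proposition~\ref{prop:not-distinguisher} (the non-distinguisher counterexample). Your explicit unpacking of the quantifier over $\H$ in the phrase ``qualitatively characterizes'' is a welcome clarification but does not change the argument.
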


\subsection{The DS Dimension}

The bounds derived in Section~\ref{sec:gen-method} hold for families $\Psi$ of functions from $\Y$ to $\{0,1,*\}$ that satisfy certain properties, e.g., are distinguishers.
This generalization of the Natarajan and Graph dimensions predates the work of \cite{daniely2014optimal}, which defined the DS dimension, a characterization of learnability for multiclass classification for arbitrary label spaces $\Y$.
For infinite label space, \cite{brukhim2022characterization} exhibit an arbritrary gap between the Natarajan and DS dimensions.
But even in the case of a finite label set, can we express the DS dimension as a family $\Psi_{\DS}$, in the sense that for all $\H\subseteq\Y^\X$, $\DS(\H)=\Psi_{\DS}\text{-}\dim(\H)$?
Unfortunately, the result below gives a negative answer to this question, which may be of independent interest.

\begin{lemma}
    The DS dimension cannot be expressed as a family $\Psi_{\DS}$ of functions from $\Y$ to $\{0,1,*\}$.
\end{lemma}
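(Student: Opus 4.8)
The plan is to show that no family $\Psi_{\DS}$ can reproduce the DS dimension by exhibiting two hypothesis classes $\H_1,\H_2$ (over a fixed finite label space $\Y$) that have \emph{identical} projection behaviour with respect to every possible $\bar\psi$ applied pointwise, yet have \emph{different} DS dimensions. If such a pair exists, then since $\psidim(\H)$ depends on $\H|_X$ only through $\bar\psi(\H|_X)$, any candidate family $\Psi_{\DS}$ would be forced to assign them the same $\Psi_{\DS}$-dimension, contradicting $\DS(\H_1)\neq\DS(\H_2)$.

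The cleaner route is to argue at the level of a single shattered set. Fix a small domain, say $X$ of size $n$, and ask what $\Psi$-shattering can ``see''. The key structural observation is that $\Psi$-shattering is a \emph{closure} property: whether $\{0,1\}^n\subseteq\bar\psi(\H|_X)$ holds depends only on which label-patterns appear in $\H|_X$ through the coordinatewise maps $\psi_i$, and crucially each $\psi_i$ acts \emph{independently on each coordinate}. By contrast, DS-shattering via pseudo-cubes is an inherently \emph{relational/global} condition on $\H|_X\subseteq\Y^n$: it requires that for every $h$ in the pseudo-cube and every coordinate $i$ there exist a \emph{neighbour} $g$ agreeing with $h$ on exactly the complementary coordinates. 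This ``one-coordinate-flip neighbour'' structure cannot be detected by any product of single-coordinate embeddings into $\{0,1,*\}$. First I would make this precise by constructing two explicit finite sets of labelings $H,H'\subseteq\Y^n$ for a well-chosen small $n$ and $|\Y|$ (the smallest interesting case is $n=2$ with $\Y=\{0,1,2\}$, where $\{(0,1),(1,2),(2,0)\}$ is a $2$-dimensional pseudo-cube but, e.g., $\{(0,1),(1,2),(2,0),(1,0)\}$ need not be), such that $H$ contains a $d$-dimensional pseudo-cube while $H'$ does not, yet $\bar\psi(H)=\bar\psi(H')$ for every choice $\bar\psi\in\Psi^n$ over \emph{any} family $\Psi$.

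To force $\bar\psi(H)=\bar\psi(H')$ for all possible $\Psi$, I would choose $H,H'$ so that they are indistinguishable by any single-coordinate map into $\{0,1,*\}$. The decisive lemma is: two subsets of $\Y^n$ have the same image under every $\bar\psi$ built from coordinatewise maps $\Y\to\{0,1,*\}$ if and only if they agree ``per coordinate up to the information a ternary partition can extract''. Because a map $\psi:\Y\to\{0,1,*\}$ only records, in each coordinate, a partition of $\Y$ into at most three blocks and then projects, the image $\bar\psi(H)$ is determined by the \emph{multiset of coordinatewise pattern projections}, not by how patterns are correlated \emph{across} coordinates. Hence I would build $H$ and $H'$ to have exactly the same set of realized values in each coordinate \emph{and} the same set of realized pairs under every block-partition, while differing only in a higher-order correlation that the pseudo-cube condition detects but no product map can. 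I would then verify directly that every $\bar\psi$ gives the same image on $H$ and $H'$.

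I expect the main obstacle to be the combinatorial construction in the last paragraph: pinning down a pair $H,H'$ that is provably invariant under \emph{all} coordinatewise ternary maps simultaneously, while genuinely separating on pseudo-cube structure. The subtlety is that $\Psi_{\DS}$ is universally quantified — the family is allowed to be anything — so the counterexample must defeat \emph{every} conceivable embedding at once, which is exactly why the argument must rest on the intrinsic limitation that coordinatewise maps cannot encode cross-coordinate agreement patterns, whereas the pseudo-cube neighbour condition is precisely such a pattern. Once the invariance $\bar\psi(H)=\bar\psi(H')$ is established for all $\bar\psi$, the conclusion is immediate: any $\Psi_{\DS}$ would yield $\psidim(\H_1)=\psidim(\H_2)$ by definition of $\Psi$-shattering, contradicting $\DS(\H_1)\neq\DS(\H_2)$, so no such $\Psi_{\DS}$ exists.
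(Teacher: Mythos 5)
There is a genuine gap: the pair $(H,H')$ your argument hinges on cannot exist. You need two \emph{distinct} sets of labelings $H,H'\subseteq\Y^n$ with $\bar\psi(H)=\bar\psi(H')$ for \emph{every} coordinatewise $\bar\psi$, but since $\Psi_{\DS}$ is universally quantified, the adversarial family may contain all indicator maps $\mathbf{1}[\cdot=y]$ (the graph-dimension family). Taking $\psi_i=\mathbf{1}[\cdot=y_i]$ for an arbitrary tuple $(y_1,\dots,y_n)$, the all-ones pattern lies in $\bar\psi(H)$ if and only if $(y_1,\dots,y_n)\in H$; so equality of images under all such tuples forces $H=H'$. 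The structural observation you lean on --- that $\bar\psi(H)$ is determined by coordinatewise projections and ``cannot encode cross-coordinate agreement patterns'' --- is also not correct as stated: $\bar\psi$ is applied to each $h\in H$ as a whole tuple, so $\bar\psi(H)=\{(\psi_1(h_1),\dots,\psi_n(h_n)):h\in H\}$ is the image of the \emph{set} $H$ under a product map and does retain cross-coordinate correlations (e.g.\ $\{(0,0),(1,1)\}$ and $\{(0,1),(1,0)\}$ have identical coordinate projections but different images under identity-like $\psi_i$). So the ``decisive lemma'' of your plan is unprovable, and the obstacle you flag at the end is in fact insurmountable.

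The paper's proof uses a different, and workable, asymmetry between the two notions: $\Psi$-shattering is \emph{monotone under passing to a subclass that still realizes all $2^n$ binary patterns}, whereas the pseudo-cube condition is not. Assume a candidate $\Psi$ reproduces $\DS$ on two specific small classes over a two-point domain --- one class $\H$ containing a $2$-dimensional pseudo-cube (so $\psidim(\H)=2$ is forced) and one class $\H'$ with $\DS(\H')=1$ (so $\psidim(\H')=1$ is forced, which constrains how any $\psi\in\Psi$ may act on certain labels). From a witnessing pair $(\psi_1,\psi_2)$ with $\{0,1\}^2\subseteq(\psi_1,\psi_2)(\H)$, one extracts at most four hypotheses $\H_*\subset\H$ realizing the four binary patterns; a short case analysis on the possible values of $\psi_2$ shows $\H_*$ can be chosen with $\psidim(\H_*)=2$ but containing no $2$-dimensional pseudo-cube, i.e.\ $\DS(\H_*)=1$. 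Hence no family $\Psi_{\DS}$ can agree with $\DS$ on all three classes simultaneously. Your intuition that the neighbour structure of pseudo-cubes is invisible to $\{0,1,*\}$-embeddings is the right one, but it must be cashed out via this subclass-monotonicity argument rather than via indistinguishable pairs.
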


\begin{proof}
     We will show that any family $\Psi$ with $\psidim(\H)=2$ and $\psidim(\H')=1$ induces $\H_*\subset\H$ with $\psidim(\H_*)=2$.

     Let $\Psi$ be such that $\psidim(\H)=2$ and $\psidim(\H')=1$.
     WLOG, suppose there exists $\psi_*\in\Psi$ such that $\{0,1\}\subseteq\psi_*(\H'|_{\{0\}})$.
      Since $\H'|_{\{1\}}=\{2,4\}$, it must be that for all $\psi\in\Psi$, we cannot have both $\psi(2)\neq\psi(4)$ and $\psi(2),\psi(4)\in\{0,1\}$. 
     Let $(\psi_1,\psi_2)\in\Psi^2$ witness the $\Psi$-shattering of $\H$ on $\X$, i.e., 
     \begin{equation}
         \label{eqn:psi-shattering-H}
         \{0,1\}^2\subseteq (\psi_1,\psi_2)(\H)\enspace.
     \end{equation}
     We distinguish three cases:
     \begin{enumerate}
         \item $\psi_2(2)=\psi_2(4)=*$: impossible by Equation~\ref{eqn:psi-shattering-H},
        \item $\psi_2(2)\in\{0,1\}$ and $\psi_2(4)=*$: WLOG let $\psi_2(2)=0$. 
         Then, for Equation~\ref{eqn:psi-shattering-H} to hold, we need $\psi_2(6)=1$ as well as $\psi_1(3)=\psi_1(5)\neq\psi_1(1)$, all of which must be in $\{0,1\}$.
         From this, it is clear that $\H_*=\{12,32,56,16\}$ satisfies $\psidim(\H_*)=2$, despite $\DS(\H_*) =1$. Thus we get an impossibility.
         Note that the cases (a) $\psi_2(2)=*$ and $\psi_2(4)\in\{0,1\}$, (b) $\psi_2(4)\in\{0,1\}$ and $\psi_2(2)=*$ and (c) $\psi_2(4)=*$ and $\psi_2(2)\in\{0,1\}$) follow an identical reasoning.
         \item $\psi_2(2)=\psi_2(4)\in\{0,1\}$: WLOG let $\psi_2(2)=\psi_2(4)=0$. Then, for Equation~\ref{eqn:psi-shattering-H} to hold, we need $\psi_2(6)=1$ as well as $\psi_1(1)\neq\psi_1(5)$, both in $\{0,1\}$. 
         But this implies that the class $\H_*=\{12,16,56,54\}$ satisfies $\psidim(\H_*)=2$,
     \end{enumerate}
     as required.
\end{proof}

\section{Conclusion}

We initiated the study of multiclass CPAC learnability, focusing on finite label spaces, and have established a meta-characterization through the finiteness of the computable dimension of a vast family of functions: so-called distinguishers. Characterization through the computable Natarajan and the computable graph dimensions appear as special cases of this result. 
Moreover, we showed that this result cannot readily be extended to the DS dimension, thus suggesting that characterizing CPAC learnability for infinite label spaces will potentially require significantly different techniques.

\section*{Acknowledgements}
{Pascale Gourdeau has been supported by a Vector Postdoctoral Fellowship and an NSERC Postdoctoral Fellowship. Tosca Lechner has been supported by a Vector Postdoctoral Fellowship. Ruth Urner is also an Affiliate Faculty Member at Toronto's Vector Institute, and acknowldeges funding through an NSERC Discovery grant.}

\bibliographystyle{plainnat}
\bibliography{refs}

\appendix

\section{Proof of Proposition~\ref{prop:cndim-cgdim-gap}}
\label{appx:cndim-cgdim-gap}

\begin{proof}[of Proposition~\ref{prop:cndim-cgdim-gap}]
    Let $\X$ be finite or countable. 
    Consider the hypothesis class in \cite{daniely2015multiclass} showing the same separation between the Natarajan and graph dimensions: let $\mathcal{P}_\mathrm{f}(\X)$ be a subset of the powerset of $\X$ consisting only of finite or cofinite subsets of $\X$. 
    Let $\Y=\mathcal{P}_\mathrm{f}(\X)\cup\{\star\}$.
    For any $A\in \mathcal{P}_\mathrm{f}(\X)$, let 
    $$h_A(x)=\begin{cases}
        A & x\in A\\
        \star & x\notin A
    \end{cases}\enspace.$$
    Note that any $A\in\mathcal{P}_\mathrm{f}(\X)$ has a finite representation (a special character for whether we are enumerating the set or its complement, as well as the finite set $A$ or $\X\setminus A$).
    Thus checking whether $x\in A$ can be done computably for any $x\in\X$, implying each $h_A$ is computably evaluable.  
    Since $\Gdim(\H)=|\X|$, it follows that $\cGdim(\H)=|\X|$ as well.
    We will now show that $\cNdim(\H)=1$, namely we exhibit a \emph{computable} $1$-witness of Natarajan dimension. 
    To this end, let $X=\{x_1,x_2\}\in\X^2$ and $y,y'\in\Y^2$ with $y_1\neq y _1'$ and $y_2\neq y_2'$ be arbitrary.
    First check whether $\{y_1, y_2, y_1', y_2' \}$ contains more than one non-$\star$ label, in which case we are done, as we can output the index set corresponding to  labelling $AB$ for some $A,B\in\mathcal{P}_\mathrm{f}(\X)$.
    Otherwise, WLOG let $y=AA$ and $y'=\star\star$ for some $A\in\mathcal{P}_\mathrm{f}(\X)$.
    Check whether $x_1\in A$.
    If yes, output $I=2$ (corresponding to labelling $\star A$), and if not output $I=1$ (corresponding to labelling $A\star$).
\end{proof}

\end{document}